\theoremstyle{plain}
\newtheorem{prop}{Proposition}
\newtheorem{cor}{Corollary}
\newtheorem{thm}{Theorem}
\newcommand{\STAB}[1]{\begin{tabular}{@{}c@{}}#1\end{tabular}}
\icmltitlerunning{Learning Stable Classifiers by Transferring Unstable Features}
\begin{document}

\twocolumn[
\icmltitle{Learning Stable Classifiers by Transferring Unstable Features}



\icmlsetsymbol{equal}{*}
  
\begin{icmlauthorlist}
\icmlauthor{Yujia Bao}{csail}
\icmlauthor{Shiyu Chang}{ucsb}
\icmlauthor{Regina Barzilay}{csail}
\end{icmlauthorlist}

\icmlaffiliation{csail}{MIT CSAIL}
\icmlaffiliation{ucsb}{Computer Science, UC Santa Barbara}

\icmlcorrespondingauthor{Yujia Bao}{yujia@csail.mit.edu}

\icmlkeywords{Machine Learning, ICML}

\vskip 0.3in
]



\printAffiliationsAndNotice{}  

\begin{abstract}
While unbiased machine learning models are essential for many applications,
bias is a human-defined concept that can vary across tasks. Given only input-label pairs, algorithms may lack sufficient information to distinguish stable (causal) features from unstable (spurious) features. However, related tasks often share similar biases -- an observation we may leverage to develop stable classifiers in the transfer setting. In this work, we explicitly inform the target classifier about unstable features in the source tasks. Specifically, we derive a representation that encodes the unstable features by contrasting different data environments in the source task. We achieve robustness by clustering data of the target task according to this representation and minimizing the worst-case risk across these clusters. We evaluate our method on both text and image classifications. Empirical results demonstrate that our algorithm is able to maintain robustness on the target task for both synthetically generated environments and real-world environments. Our code is available at \url{https://github.com/YujiaBao/Tofu}.
\end{abstract}

\section{Introduction}
Automatic de-biasing~\citep{sohoni2020no,creager2021environment,sanh2021learning} has emerged as a promising direction for learning stable classifiers. The key premise here is that no additional annotations for the bias attribute are required. However, bias is a human-defined concept and can vary from task to task. Provided with only input-label pairs, algorithms may not have sufficient information to distinguish stable (causal) features from unstable (spurious) features.

To address this challenge, we note that related tasks are often fraught with similar spurious correlations. For instance, when classifying animals such as camels vs.\ cows, their backgrounds (desert vs.\ grass) may constitute a spurious correlation~\citep{beery2018recognition}. The same bias between the label and the background also persists in other related classification tasks (such as sheep vs.\ antelope). In the resource-scarce target task, we only have access to the input-label pairs. However, in the source tasks, where training data is sufficient, identifying biases may be easier. For instance, we may have examples collected from multiple environments, in which correlations between bias features and the label are different~\citep{arjovsky2019invariant}. These source environments help us define the exact bias features that we want to regulate.

\begin{figure*}[t]
\centering
\includegraphics[width=0.9\linewidth]{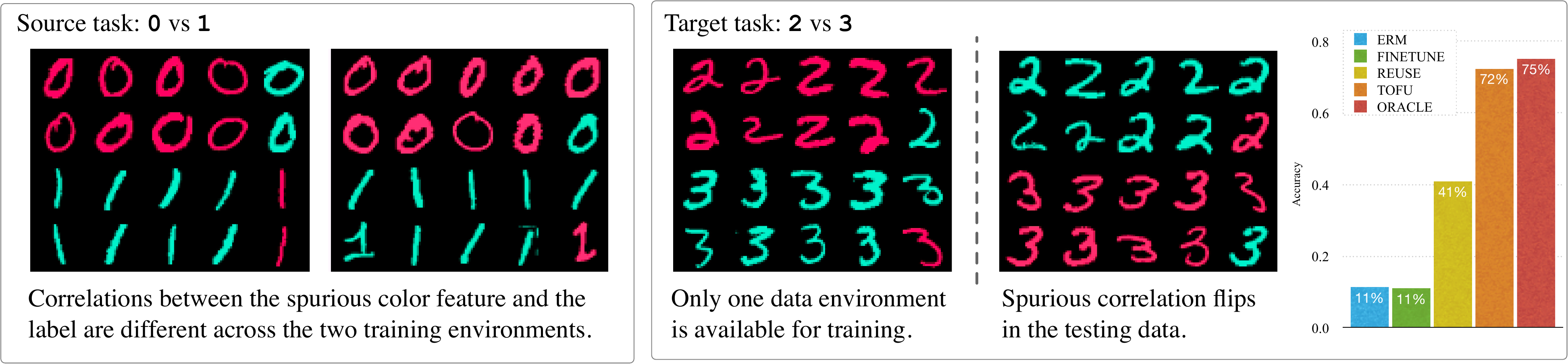}
\caption{Transferring across tasks in Colored MNIST~\citep{arjovsky2019invariant}. On the source task, we learn a color-invariant model that achieves oracle performance (given direct access to the unstable features). However, directly transferring this model to the target task, by reusing or fine-tuning its feature extractor, severely overfits the spurious correlation and underperforms the majority baseline (50\%) on a test set where the spurious correlation flips. By explicitly transferring the unstable features, our algorithm \textsc{tofu} (Transfer OF Unstable features) is able to reach the oracle performance.}
\label{fig:intro}
\end{figure*}

One obvious approach to utilize the source task is direct transfer. Specifically, given multiple source environments, we can train an unbiased source classifier and then apply its representation to the target task. However, we empirically demonstrate that while the source classifier is not biased when making its final predictions, its internal continuous representation can still encode information about the unstable features. Figure~\ref{fig:intro} shows that in Colored MNIST, where the digit label is spuriously correlated with the image color, direct transfer by either re-using or fine-tuning the representation learned on the source task fails in the target task, performing no better than the majority baseline.

In this paper, we propose to explicitly inform the target classifier about unstable features from the source data. Specifically, we derive a representation that encodes these unstable features using the source environments. Then we identify distinct subpopulations by clustering examples based on this representation and apply group DRO~\citep{sagawa2019distributionally} to minimize the worst-case risk over these subpopulations. As a result, we enforce the target classifier to be robust against different values of the unstable features. In the example above, animals would be clustered according to backgrounds, and the classifier should perform well regardless of the clusters (backgrounds).

The remaining question is how to compute the unstable feature representation using the source data environments. Following \citet{yujia2021predict},  we hypothesize that unstable features are reflected in mistakes observed during classifier transfer across environments. For instance, if the classifier uses the background to distinguish camels from cows, the camel images that are predicted correctly would have a desert background while those predicted incorrectly are likely to have a grass background. More generally, we prove that among examples with the same label value, those with the same prediction outcome will have more similar unstable features than those with different predictions. By forcing examples with the same prediction outcome to stay closer in the feature space, we obtain a representation that encodes these latent unstable features.

We evaluate our approach, Transfer OF Unstable features (\textsc{tofu}), on both synthetic and real-world environments. Our synthetic experiments first confirm our hypothesis that standard transfer approaches fail to learn a stable classifier for the target task. By explicitly transferring the unstable features, our method significantly improves over the best baseline across 12 transfer settings (22.9\% in accuracy), and reaches the performance of an oracle model that has direct access to the unstable features (0.3\% gap). Next, we consider a practical setting where environments are defined by an input attribute and our goal is to reduce biases from other unknown attributes. On CelebA, \textsc{tofu} achieves the best worst-group accuracy across 38 latent attributes, outperforming the best baseline by 18.06\%. Qualitative and quantitative analyses confirm that \textsc{tofu} is able to identify the unstable features.
\section{Related work}\label{sec:related}

\paragraph{Removing bias via annotations:}
Due to idiosyncrasies of the data collection process, annotations are often coupled with unwanted biases~\citep{buolamwini2018gender,schuster2019towards,mccoy-etal-2019-right,yang2019analyzing}.
To address this issue and learn robust models, researchers leverage extra information~\citep{belinkov2019don,stacey-etal-2020-avoiding,hinton2002training,clark2019don,he2019unlearn, mahabadi2020end}.
One line of work assumes that the bias attributes are known and have been annotated for each example, e.g., group distributionally robust optimization (DRO)~\citep{hu2018does, oren2019distributionally, Sagawa*2020Distributionally}. By defining groups based on these bias attributes, we explicitly specify the distribution family to optimize over. However, identifying the hidden biases is time-consuming and often requires domain knowledge~\citep{zellers-etal-2019-hellaswag,sakaguchi2020winogrande}. To address this issue, another line of work~\citep{peters2016causal,krueger2020outofdistribution,chang2020invariant,jin2020enforcing,ahuja2020invariant,arjovsky2019invariant,yujia2021predict,kuang2020stable,shen2020stable} only assumes access to a set of data environments. These environments are defined based on readily-available information of the data collection circumstances, such as location and time. The main assumption is that while spurious correlations vary across different environments, the association between the causal features and the label should stay the same. Thus, by learning a representation that is invariant across all environments, they alleviate the dependency on spurious features. 
In contrast to previous works, we don't have access to any additional information besides the labels in our target task. We show that we can achieve robustness by transferring the unstable features from a related source task.

\begin{figure*}[t]
  \centering
  \includegraphics[width=0.9\linewidth]{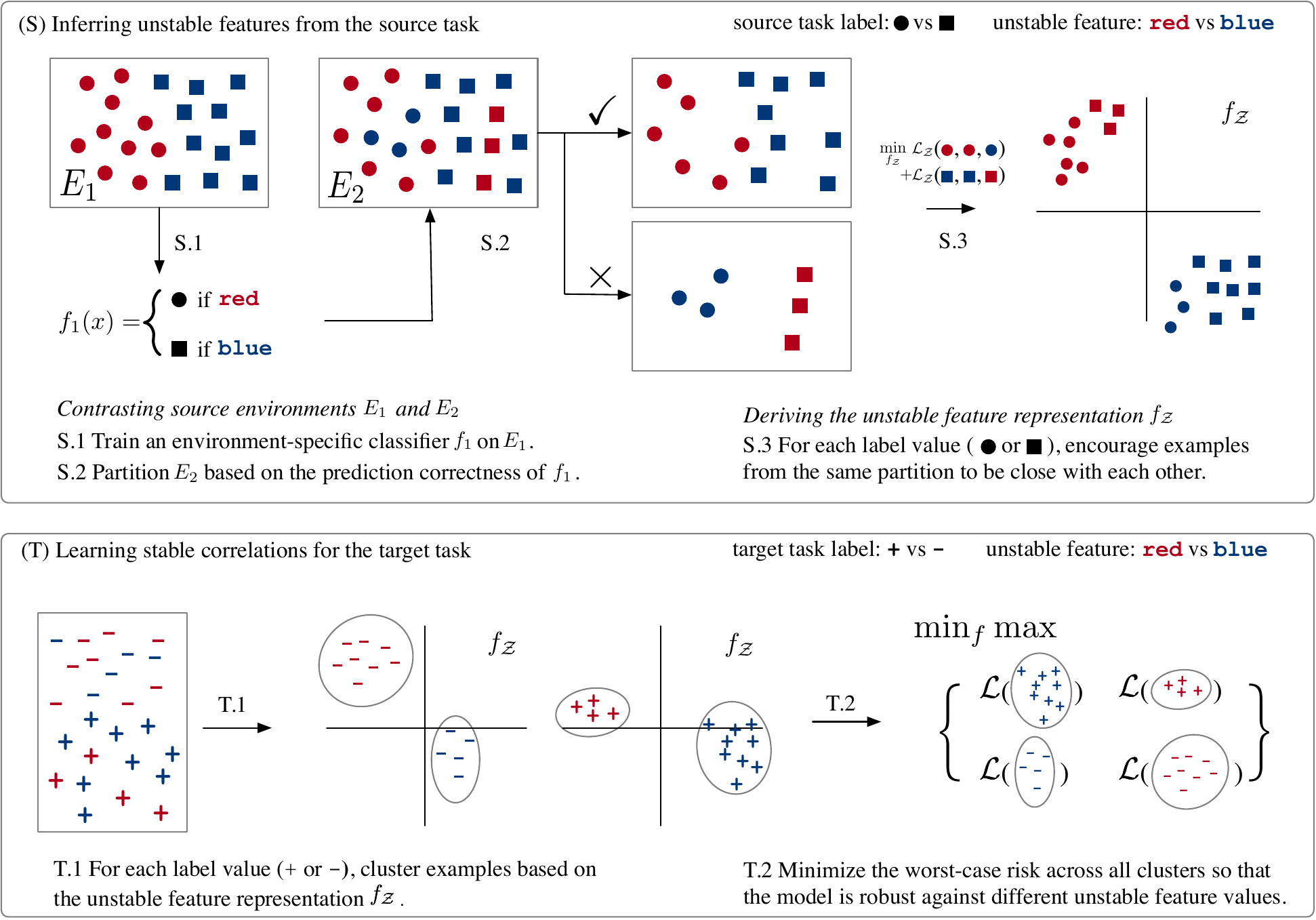}
  \caption{Our algorithm \textsc{tofu} 1) infers unstable features from the source task (Section~\ref{sec:source}) and 2) learns stable correlations for the target task (Section~\ref{sec:target}). We create partitions for all environment pairs.
  For ease of illustration, we only depict using $f_1$ to partition $E_2$. Best viewed in color.  
  }
  \label{fig:model}
\end{figure*}

\paragraph{Automatic de-biasing}
A number of recent approaches focus on a more common setting where the algorithm only has access to the input-label pairs. \citet{li-2021-discover} leverages  disentangled representations in generative models to identify biases.
\citet{sanh2021learning, nam2020learning, utama2020towards} find that weak models are more vulnerable to spurious correlations as they only learn shallow heuristics. By boosting from their mistakes, they obtain a more robust model. \citet{qiao2020learning} uses adversarial learning to augment the biased training data. \citet{creager2021environment,sohoni2020no,ahmed2020systematic,matsuura2020domain,liu2021heterogeneous} propose to identify minority groups by looking at the features produced by a biased model.

These automatic approaches are intriguing as they do not require additional annotation. However, we note that bias is a human-centric concept and can vary from tasks to tasks. For models that only have access to the input-label pairs, they have no information to distinguish causal features from bias features.
For example, consider the Colored MNIST dataset, where color and digit shape are correlated in the training set but not in the test set. 
If our task is to predict the digit, then color becomes the spurious bias that we want to remove. Vice versa, if we want to predict the color, then digit shape is spurious.
\citet{creager2021environment,nam2020learning} empirically demonstrate that they can learn a color-invariant model for the digit prediction task. However, their approaches will result in the same color-invariant model for the color prediction task, and thus fail at test time, when color and digit are no longer correlated.
In this work, we leverage source tasks to define the exact bias that we want to remove for the target task.

\paragraph{Transferring robustness across tasks:}
Prior work has also studied the transferability of adversarial robustness across tasks. For example, \citet{hendrycks2019using, Shafahi2020Adversarially} show that by pre-training the model on a large-scale source task, we can improve the model robustness against adversarial perturbations over $l_\infty$ norm. We note that these perturbations measure the smoothness of the classifier, rather than the stability of the classifier against spurious correlations. In fact, our results show that if we directly re-use or fine-tune the pre-trained feature extractor on the target task, the model will quickly over-fit to the unstable correlations present in the data. We propose to address this issue by explicitly inferring the unstable features using the source environments and use this information to guide the target classifier during training.

\section{Method}\label{sec:method}
\paragraph{Problem formulation}
We consider the transfer problem from a source task to a target task.
For the source task, we assume the standard setting~\citep{arjovsky2019invariant} where the training data contain $n$ environments $E_1, \ldots, E_n$.
Within each environment $E_i$, examples are drawn from the joint distribution $P_i (x, y)$.
Following \citet{woodward2005making}, we define unstable features $\mathcal{Z}(x)$ as features that are \emph{differentially} correlated with the label across the environments.
We note that $\mathcal{Z}(x)$ is unknown to the model.

For the target task, we only have access to the input-label pairs $(x, y)$ (i.e. no environments).
We assume that the target label is \emph{not} causally associated with the above unstable features $\mathcal{Z}$.
However, due to collection biases, the target data may contain \emph{spurious correlations} between the label and $\mathcal{Z}$.
Our goal is to transfer the knowledge that $\mathcal{Z}$ is unstable in the source task, so that the target classifier will not rely on these spurious features.

\paragraph{Overview}
If the unstable features have been identified for the target task, we can simply apply group DRO to learn a stable classifier. By grouping examples based on the unstable features and minimizing the worst-case risk over these \emph{manually-defined} groups, we explicitly address the bias from these unstable features~\citep{hu2018does, oren2019distributionally, Sagawa*2020Distributionally}. In our setup, while these unstable features are not accessible, we can leverage the source environments to derive groups over the target data that are informative of these biases.
Applying group DRO on these \emph{automatically-derived} groups, we can eliminate the unstable correlations in the target task.


Our overall transfer paradigm is depicted in Figure~\ref{fig:model}. It consists of two steps: inferring unstable features from the source task (Section \ref{sec:source}) and learning stable correlations for the target task (Section \ref{sec:target}). First, for the source task we use a classifier trained on one environment to partition data from another environment based on the correctness of its predictions. Starting from the theoretical results in \citep{yujia2021predict}, we show that these partitions reflect the similarity of the examples in terms of their unstable features: among examples with the same label value, those that share the same prediction outcome have more similar unstable features than those with different predictions (Theorem~\ref{thm:1}). We can then derive a representation $f_\mathcal{Z}$ where examples are distributed based on the unstable features $\mathcal{Z}$. Next, we cluster target examples into groups based on the learned unstable feature representation $f_\mathcal{Z}$. These \emph{automatically-derived} groups correspond to different modes of the unstable features, and they act as proxies to the \emph{manually-defined} groups in the oracle setting where unstable features are explicitly annotated. Finally, we use group DRO to obtain our robust target classifier by minimizing the worst-case risk over these groups.

\subsection{Inferring unstable features from the source task}
\label{sec:source}
Given the data environments from the source task, we would like to 1) identify the unstable correlations across these environments; 2) learn a representation $f_{\mathcal{Z}}(x)$ that encodes the unstable features $\mathcal{Z}(x)$. We achieve the first goal by contrasting the empirical distribution of different environments (Figure~\ref{fig:model}.S.1 and Figure~\ref{fig:model}.S.2) and the second goal by metric learning (Figure~\ref{fig:model}.S.3).

Let $E_i$ and $E_j$ be two different data environments. \citet{yujia2021predict} shows that by training a classifier $f_i$ on $E_i$ and using it to make predictions on $E_j$, we can reveal the unstable correlations from its prediction results. Intuitively, if the unstable correlations are stronger in $E_i$, the classifier $f_i$ will overuse these correlations and make mistakes on $E_j$ when these stronger correlations do not hold.

In this work, we connect the prediction results directly to the unstable features. We show that the prediction results of the classifier $f_i$ on $E_j$  estimate the relative distance of the unstable features.
\begin{thm}[Simplified]\label{thm:1}
Consider examples in $E_j$ with label value $y$. Let $X_1^\checkmark, X_2^\checkmark$ denote two batches of examples that $f_i$ predicted correctly, and let $X_3^\times$ denote a batch of incorrect predictions. We use $\overline{\phantom{\,}\cdot\phantom{\,}}$ to represent the mean across a given batch. Following the same assumption in \citep{yujia2021predict}, we have 
\[
\| \overline{\mathcal{Z}}(X_1^\checkmark) - \overline{\mathcal{Z}}(X_2^\checkmark) \|_2
< \| \overline{\mathcal{Z}}(X_1^\checkmark) - \overline{\mathcal{Z}}(X_3^\times) \|_2
\]
almost surely for large enough batch size.\footnote{See Appendix~\ref{app:theory} for the full theorem and proof.}
\end{thm}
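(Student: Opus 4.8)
The plan is to reduce the claimed inequality to a comparison of \emph{population-level} conditional means of the unstable features, and then let concentration of the batch means finish the argument. Conditioning on the label value $y$ in $E_j$, I would introduce the two subpopulation means
\[
\mu_\checkmark := \mathbb{E}\!\left[\mathcal{Z}(x) \mid y,\, f_i \text{ correct on } x\right], \qquad \mu_\times := \mathbb{E}\!\left[\mathcal{Z}(x) \mid y,\, f_i \text{ incorrect on } x\right].
\]
The two correct batches $X_1^\checkmark, X_2^\checkmark$ are independent draws from the \emph{same} conditional distribution, so their empirical means both target $\mu_\checkmark$; the incorrect batch $X_3^\times$ is drawn from the complementary conditional distribution, so its empirical mean targets $\mu_\times$.

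Given this setup, the first step is a routine application of the multivariate strong law of large numbers: as the batch size grows, $\overline{\mathcal{Z}}(X_1^\checkmark) \to \mu_\checkmark$, $\overline{\mathcal{Z}}(X_2^\checkmark) \to \mu_\checkmark$, and $\overline{\mathcal{Z}}(X_3^\times) \to \mu_\times$ almost surely. By continuity of the norm this immediately gives $\| \overline{\mathcal{Z}}(X_1^\checkmark) - \overline{\mathcal{Z}}(X_2^\checkmark) \|_2 \to 0$ while $\| \overline{\mathcal{Z}}(X_1^\checkmark) - \overline{\mathcal{Z}}(X_3^\times) \|_2 \to \|\mu_\checkmark - \mu_\times\|_2$. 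Hence, as soon as the right-hand limit is shown to be strictly positive, the desired strict inequality holds for all sufficiently large batch sizes almost surely.

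The crux of the argument is therefore to establish $\mu_\checkmark \neq \mu_\times$, i.e.\ that correctly and incorrectly classified examples sharing the same label genuinely differ in their unstable features. This is exactly where I would invoke the assumption borrowed from \citet{yujia2021predict}: because $f_i$ is trained on $E_i$ and exploits the unstable correlation present there, among examples of $E_j$ with a fixed label the ones it predicts correctly are precisely those whose unstable feature takes the value favored in $E_i$, whereas the misclassified ones carry the opposite value. Their assumption encodes this as a sign/direction flip of the unstable correlation between the correct and incorrect subsets, which forces the two conditional means to separate along the unstable-feature direction and yields $\|\mu_\checkmark - \mu_\times\|_2 > 0$.

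I expect this last step to be the main obstacle, since everything else is limiting boilerplate. The difficulty is not the convergence but pinning down the precise form of the \citet{yujia2021predict} assumption and verifying that it rules out the degenerate case $\mu_\checkmark = \mu_\times$; for instance, it must exclude situations where the errors of $f_i$ are independent of $\mathcal{Z}$ (pure label noise), in which the two subpopulations would be indistinguishable in the unstable feature and the strict inequality could fail. A secondary technicality is the ``almost surely for large enough batch size'' quantifier, which I would phrase as the event $\{\text{LHS} < \text{RHS}\}$ holding for all batch sizes beyond a random threshold — a direct consequence of the almost-sure convergence above together with $\|\mu_\checkmark - \mu_\times\|_2 > 0$.
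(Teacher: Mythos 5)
Your skeleton is exactly the paper's: by the strong law of large numbers, $\overline{\mathcal{Z}}(X_1^\checkmark)$ and $\overline{\mathcal{Z}}(X_2^\checkmark)$ converge almost surely to $\mathbb{E}_{P_j^{i\checkmark}}[\mathcal{Z}\mid \mathcal{Y}=y]$ and $\overline{\mathcal{Z}}(X_3^\times)$ to $\mathbb{E}_{P_j^{i\times}}[\mathcal{Z}\mid \mathcal{Y}=y]$, so the theorem reduces to strict separation of these two conditional means. The gap is at the crux you yourself flagged: you never derive that separation, and the heuristic you offer for it --- that the sign flip of the unstable correlation between the correct and incorrect partitions ``forces the two conditional means to separate'' --- is insufficient as stated. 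The sign flip (Proposition~\ref{prop:2}, imported from \cite{yujia2021predict}) is a statement about $\mathrm{Cov}(\mathcal{Z},\mathcal{Y})$, which under uniform labels compares the two label-conditional means \emph{within} each partition; it does not by itself control the \emph{cross-partition} comparison at a fixed label value $y$. Concretely, with binary $\mathcal{Z}$ and uniform $\mathcal{Y}$, take $\mathbb{E}[\mathcal{Z}\mid\mathcal{Y}=1]=0.6$, $\mathbb{E}[\mathcal{Z}\mid\mathcal{Y}=0]=0.4$ under $P_j^{i\checkmark}$ and $\mathbb{E}[\mathcal{Z}\mid\mathcal{Y}=1]=0.2$, $\mathbb{E}[\mathcal{Z}\mid\mathcal{Y}=0]=0.4$ under $P_j^{i\times}$: the covariances have opposite signs as required, yet $\mu_\checkmark=\mu_\times$ at $y=0$, your limit $\|\mu_\checkmark-\mu_\times\|_2$ vanishes, and the strict inequality fails.

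The paper closes exactly this hole with Corollary~\ref{cor:1}, whose proof needs two distributional assumptions that your proposal does not surface. First, uniformity of $\mathcal{Y}$ within each partition turns the covariance into $0.5\sum_z zP(\mathcal{Z}=z,\mathcal{Y}=1)-0.5\sum_z zP(\mathcal{Z}=z,\mathcal{Y}=0)$, and the mixture identity $P_j=\alpha_j^i P_j^{i\checkmark}+(1-\alpha_j^i)P_j^{i\times}$ together with $\mathrm{Cov}(\mathcal{Z},\mathcal{Y};P_j)>0$ upgrades Proposition~\ref{prop:2}'s $\mathrm{Cov}(\mathcal{Z},\mathcal{Y};P_j^{i\times})<0$ to a genuine sign flip, i.e.\ $\mathrm{Cov}(\mathcal{Z},\mathcal{Y};P_j^{i\checkmark})>0$. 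Second --- and this is what your sketch is missing --- uniformity of $\mathcal{Z}$ within each partition anchors $\sum_z zP(\mathcal{Z}=z,\mathcal{Y}=1)+\sum_z zP(\mathcal{Z}=z,\mathcal{Y}=0)=0.5$ in \emph{both} partitions, so the two within-partition orderings force $\sum_z zP_j^{i\times}(\mathcal{Z}=z,\mathcal{Y}=1)<0.25<\sum_z zP_j^{i\checkmark}(\mathcal{Z}=z,\mathcal{Y}=1)$ (and the reverse at $y=0$): strict separation at every label value. My counterexample above satisfies everything except this uniform-$\mathcal{Z}$ condition, which shows the condition is not a technicality but the load-bearing hypothesis. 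It also resolves your ``pure label noise'' worry: degeneracy is excluded by the full theorem's hypotheses $\mathrm{Cov}(\mathcal{Z},\mathcal{Y};P_i)>\mathrm{Cov}(\mathcal{Z},\mathcal{Y};P_j)>0$, $\mathcal{Z}\perp\mathcal{C}\mid\mathcal{Y}$, and matched $\mathcal{Z}$ marginals across environments, which are what drive Proposition~\ref{prop:2} in the first place. In short: right reduction, same limiting argument as the paper, but the one substantive step is left as an IOU, and the particular inference you propose for it is false without the extra uniformity assumptions.
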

The result makes intuitive sense as we would expect example pairs that share the same prediction outcome should be more similar than those with different prediction outcomes.
We note that it is critical to look at examples with the same label value; otherwise, the unstable features will be coupled with the task-specific label in the prediction results.

While the value of the unstable features $\mathcal{Z}(x)$ is still not directly accessible, Theorem~\ref{thm:1} enables us to learn a feature representation $f_{\mathcal{Z}}(x)$ that preserves the distance between the examples in terms of their unstable features. 
We adopt standard metric learning~\citep{chechik2010large} to minimize the following triplet loss:
\begin{equation}
\begin{aligned}
\label{eq:hinge}
\mathcal{L}_\mathcal{Z}(X_1^\checkmark, X_2^\checkmark, X_3^\times)  = \max(0,\, &\delta+\|\overline{f_{\mathcal{Z}}}(X_1^\checkmark) - \overline{f_{\mathcal{Z}}}(X_2^\checkmark) \|_2^2\\&
-\|\overline{f_{\mathcal{Z}}}(X_1^\checkmark) - \overline{f_{\mathcal{Z}}}(X_3^\times) \|_2^2),
\end{aligned}
\end{equation}
where $\delta$ is a hyper-parameter.
By minimizing Eq~\eqref{eq:hinge},
we encourage examples that have similar unstable features to be close in the representation $f_\mathcal{Z}$.
To summarize, inferring unstable features from the source task consists of three steps (Figure~\ref{fig:model}.S):
\begin{itemize}
\item[\textbf{S.1}] For each source environment $E_i$, train an environment-specific classifier $f_i$.
\item[\textbf{S.2}] For each pair of environments $E_i$ and $E_j$, use classifier $f_i$ to partition $E_j$ into two sets: $E_{j}^{i \checkmark}$ and $E_j^{i \times}$, where $E_{j}^{i \checkmark}$ contains examples that $f_i$ predicted correctly and $E_j^{i \times}$ contains those predicted incorrectly.
\item[\textbf{S.3}] Learn an unstable feature representation $f_{\mathcal{Z}}$ by minimizing Eq~\eqref{eq:hinge} across all pairs of environments $E_i, E_j$ and all possible label value $y$: 
  \[f_\mathcal{Z} = \arg\min
  \sum_{y, E_i\neq E_j} 
  \mathbb{E}_{X_1^\checkmark, X_2^\checkmark, X_3^\times}
  \left[\mathcal{L}_\mathcal{Z}(X_1^\checkmark, X_2^\checkmark, X_3^\times)\right],
  \]
  where batches $X_1^\checkmark, X_2^\checkmark$ are sampled uniformly from $E_j^{i\checkmark}|_y$ and batch $X_3^\times$ is sampled uniformly from $E_j^{i\times}|_y$ ($\cdot|_y$ denotes the subset of $\cdot$ with label value $y$).
\end{itemize}

\subsection{Learning stable correlations for the target task}
\label{sec:target}
Given the unstable feature representation $f_\mathcal{Z}$, our goal is to learn a target classifier that focuses on the stable correlations rather than using unstable features.
Inspired by group DRO~\citep{Sagawa*2020Distributionally}
we minimize the worst-case risk across groups of examples that are representative of different unstable feature values.
However, in contrast to DRO, these groups are constructed automatically based on the previously learned representation $f_\mathcal{Z}$.

For each target label value $y$, we use the representation $f_\mathcal{Z}$ to cluster target examples with label $y$ into different clusters (Figure~\ref{fig:model}.T.1). Since these clusters capture different modes of the unstable features, they are approximations of the typical manually-defined groups when annotations of the unstable features are available. By minimizing the worst-case risk across all clusters, we explicitly enforce the classifier to be robust against unstable correlations (Figure~\ref{fig:model}.T.2). We note that it is important to cluster within examples of the same label, as opposed to clustering the whole dataset.
Otherwise, the cluster assignment may be correlated with the target label.

Concretely, learning stable correlations for the target task has two steps (Figure~\ref{fig:model}.T).
\begin{itemize}
  \item[\textbf{T.1}] For each label value $y$, apply K-means ($l_2$ distance) to cluster examples with label $y$ in the feature space $f_\mathcal{Z}$.
  We use $C_1^y, \ldots, C_{n_c}^y$ to denote the resulting cluster assignment, where $n_c$ is a hyper-parameter.
  \item[\textbf{T.2}] Train the target classifier $f$ by minimizing the worst-case risk over all clusters:
  \[
  f = \arg \min \max_{i, y} \mathcal{L}(C_i^y),
  \]
  where $\mathcal{L}(C_i^y)$ is the empirical risk on cluster $C_i^y$.
\end{itemize}

\section{Experimental setup}\label{sec:setup}

\begin{table}[t]
  \centering
  \caption{Pearson correlation coefficient between the spurious feature $\mathcal{Z}$ and the label $Y$ for each task. The validation environment $E^{\text{val}}$ follows the same distribution as $E_1^{\text{train}}$. We study the transfer problem between different task pairs.
  For the source task $S$, the model can access $E_1^{\text{train}}(S), E_2^{\text{train}}(S)$ and $E^{\text{val}}(S)$.
  For the target task $T$, the model can access $E_1^{\text{train}}(T)$ and $E^{\text{val}}(T)$.
  \label{tab:data}}
  \small
  \begin{tabular}{cccccc}
  \toprule
  $\rho(\mathcal{Z}, Y)$ & Task & $E_1^{\text{train}}$ & $E_2^{\text{train}}$ & $E^{\text{val}}$ & $E^{\text{test}}$ \\\midrule
  \multirow{2}{*}{\textsc{mnist}}
  & \textsc{odd} & \cellcolor{red!55} 0.87 & \cellcolor{red!46} 0.75 & \cellcolor{red!55} 0.87 & \cellcolor{blue!15} -0.11\\\cmidrule{2-6}
  & \textsc{even} & \cellcolor{red!55} 0.87 & \cellcolor{red!46} 0.75 & \cellcolor{red!55} 0.87 & \cellcolor{blue!15} -0.11\\\midrule
  \multirow{3}{*}{\textsc{beer review}}
  & \textsc{look} & \cellcolor{red!40} 0.60 & \cellcolor{red!50} 0.80 & \cellcolor{red!40} 0.60 & \cellcolor{blue!47} -0.80\\\cmidrule{2-6}
  & \textsc{aroma} & \cellcolor{red!40} 0.60 & \cellcolor{red!50} 0.80 & \cellcolor{red!40} 0.60 & \cellcolor{blue!47} -0.80\\\cmidrule{2-6}
  & \textsc{palate} & \cellcolor{red!40} 0.60 & \cellcolor{red!50} 0.80 & \cellcolor{red!40} 0.60 & \cellcolor{blue!47} -0.80\\\midrule 
  \multirow{2}{*}{\textsc{ask2me}}
  & \textsc{pene.} & \cellcolor{red!20} 0.31 & \cellcolor{red!35} 0.52 & \cellcolor{red!20} 0.31 & 0.00\\\cmidrule{2-6}
  & \textsc{inci.} & \cellcolor{red!30} 0.44 & \cellcolor{red!43} 0.66 & \cellcolor{red!30}0.44 & 0.00\\\midrule
  \multirow{2}{*}{\textsc{waterbird}}
  & \textsc{water} & \cellcolor{red!24} 0.36 & \cellcolor{red!42} 0.63 & \cellcolor{red!24} 0.36 & 0.00\\\cmidrule{2-6}
  & \textsc{sea} & \cellcolor{red!26} 0.39 & \cellcolor{red!42} 0.64 & \cellcolor{red!26} 0.39 & 0.00  
  \\\bottomrule
  \end{tabular}
\end{table}
\begin{table*}[t]
  \centering
  \caption{Target task accuracy of different methods.
  All methods are tuned based on a held-out validation set that follows from the same distribution as the target training data. Bottom right: standard deviation across 5 runs. Upper right: source task testing performance (if applicable).
  }\label{tab:big}
  \small
  \begin{tabular}{cccccccccc}
  \toprule
  & \textsc{source} &\textsc{target} & \textsc{erm} & $\textsc{reuse}_{\textsc{pi}}$ & $\textsc{finetune}_{\textsc{pi}}$ & \textsc{multitask} & \textsc{tofu} & \textsc{oracle} \\
  \midrule
\multirow{2}{*}{\STAB{\rotatebox[origin=c]{90}{\textsc{mnist}}}}\hspace{-1mm}
& \textsc{odd} & \textsc{even} & 12.3\tiny{$\pm0.6$} & 14.4$_{\tiny\pm1.0}^{\tiny(70.9)}$ &  11.2$_{\tiny\pm2.1}^{\tiny(70.1)}$ & 11.6{$\tiny^{(69.6)}_{\pm0.6}$} & \textbf{69.1}\tiny{$\pm1.6$} &   68.7\tiny{$\pm0.9$} \\ \cmidrule{2-9}
& \textsc{even} & \textsc{odd} & \phantom{0}9.7\tiny{$\pm0.6$} & 19.2$_{\tiny\pm2.3}^{\tiny (71.1)}$ &  11.5$_{\tiny\pm1.2}^{\tiny (71.1)}$ & 10.1$_{\tiny{\pm0.7}}^{\tiny(70.0)}$ & \textbf{66.8}\tiny{$\pm0.8$} & 67.8\tiny{$\pm0.5$} \\ \midrule
\multirow{6}{*}{\STAB{\rotatebox[origin=c]{90}{\hspace{-9mm}\textsc{beer review}}}}\hspace{-1mm}
& \textsc{look}   & \textsc{aroma}  & 55.5\tiny{$\pm1.7$} & 31.9$_{\tiny\pm1.0}^{\tiny(70.1)}$ & 53.7$_{\tiny\pm1.4}^{\tiny(70.1)}$ & 54.1$_{\tiny\pm2.2}^{\tiny(76.0)}$ & \textbf{75.9}\tiny{$\pm1.4$} & 77.3\tiny{$\pm1.3$} \\ \cmidrule{2-9}
& \textsc{look}    & \textsc{palate} & 46.9\tiny{$\pm0.3$} & 22.8$_{\tiny\pm1.9}^{\tiny(70.0)}$ & 49.3$_{\tiny\pm2.1}^{\tiny(73.2)}$ & 52.8$_{\tiny\pm2.9}^{\tiny(73.3)}$ & \textbf{73.8}\tiny{$\pm0.7$} & 74.0\tiny{$\pm1.2$} \\ \cmidrule{2-9} 
& \textsc{aroma}   & \textsc{look}   & 63.9\tiny{$\pm0.6$} & 40.1$_{\tiny\pm3.1}^{\tiny(68.6)}$ & 65.2$_{\tiny\pm1.8}^{\tiny(66.4)}$ & 64.0$_{\tiny\pm0.6}^{\tiny(71.5)}$ & \textbf{80.9}\tiny{$\pm0.5$} & 80.1\tiny{$\pm0.6$} \\ \cmidrule{2-9} 
& \textsc{aroma}   & \textsc{palate} & 46.9\tiny{$\pm0.3$} & 14.0$_{\tiny\pm2.4}^{\tiny(68.3)}$ & 47.9$_{\tiny\pm3.3}^{\tiny(63.2)}$ & 50.0$_{\tiny\pm1.4}^{\tiny(71.2)}$ & \textbf{73.5}\tiny{$\pm1.1$} & 74.0\tiny{$\pm1.2$} \\ \cmidrule{2-9} 
& \textsc{palate}  & \textsc{look}   & 63.9\tiny{$\pm0.6$} & 40.4$_{\tiny\pm2.8}^{\tiny (57.2)}$ & 64.3$_{\tiny\pm2.7}^{\tiny (60.1)}$ & 63.1$_{\tiny\pm1.0}^{\tiny(75.9)}$ & \textbf{81.0}\tiny{$\pm1.0$} & 80.1\tiny{$\pm0.6$}  \\ \cmidrule{2-9} 
& \textsc{palate}  & \textsc{aroma}  & 55.5\tiny{$\pm1.7$} & 23.1$_{\tiny \pm3.3}^{\tiny (59.2)}$ & 54.5$_{\tiny\pm1.2}^{\tiny (58.7)}$ & 56.5$_{\tiny\pm1.3}^{\tiny(73.3)}$ & \textbf{76.9}\tiny{$\pm1.5$} & 77.3\tiny{$\pm1.3$} \\\midrule
\multirow{2}{*}{\STAB{\rotatebox[origin=c]{90}{\textsc{ask.}}}}\hspace{-1mm}
& \textsc{pene}    & \textsc{inci.}  & 79.3\tiny{$\pm1.3$} & 71.7$_{\tiny \pm0.5}^{\tiny (72.7)}$ & 79.3$_{\tiny\pm0.8}^{\tiny (71.2)}$ & 71.1$_{\tiny\pm1.4}^{\tiny(73.5)}$ & \textbf{83.2}\tiny{$\pm1.8$} & 84.8\tiny{$\pm1.2$} \\\cmidrule{2-9}
& \textsc{inci.}   & \textsc{pene.}  & 71.6{\tiny$\pm1.8$} & 64.1$_{\tiny\pm1.5}^{\tiny(83.4)}$ & 72.0$_{\tiny\pm3.1}^{\tiny(83.4)}$ & 61.9$_{\tiny\pm0.7}^{\tiny(82.4)}$ & \textbf{78.1}\tiny{$\pm1.4$} & 78.3\tiny{$\pm0.9$}\\  \midrule   
\multirow{2}{*}{\STAB{\rotatebox[origin=c]{90}{\textsc{bird}}}}\hspace{-1mm}
& \textsc{water}   & \textsc{sea}    & 81.8\tiny{$\pm4.3$} & 87.8$_{\tiny\pm1.1}^{\tiny(99.5)}$ & 82.0$_{\tiny\pm4.0}^{\tiny(99.5)}$ & 88.0$_{\tiny\pm0.9}^{\tiny(99.5)}$ & \textbf{93.1}{\tiny$\pm0.4$} & 93.7{\tiny$\pm0.7$} \\\cmidrule{2-9}
& \textsc{sea}     & \textsc{water}  & 75.1\tiny{$\pm6.3$} & 94.6$_{\tiny\pm1.6}^{\tiny(93.3)}$ & 78.2$_{\tiny\pm8.1}^{\tiny(93.1)}$ & 93.5$_{\tiny\pm1.9}^{\tiny(92.7)}$ & \textbf{99.0}{\tiny$\pm0.4$} & 98.9\tiny{$\pm0.5$} \\\midrule\midrule
& \multicolumn{2}{l}{Average} &  55.2\phantom{{\tiny$\pm0.4$}} &	43.7\phantom{$_{\tiny\pm0.4}^{\tiny (10.0)}$}  &	55.8\phantom{$_{\tiny\pm0.4}^{\tiny (10.0)}$}  &	56.4\phantom{$_{\tiny\pm0.4}^{\tiny (10.0)}$}	& \textbf{79.3}\phantom{{\tiny$\pm0.4$}}	& 79.6\phantom{{\tiny$\pm0.4$}}
\\\bottomrule
  \end{tabular}
\end{table*}

\subsection{Datasets and settings}

\paragraph{Synthetic environments}
We start with controlled experiments where environments are created based on the spurious correlation. We consider four datasets: MNIST~\citep{lecun1998gradient}, BeerReview~\citep{mcauley2012learning},
ASK2ME~\citep{doi:10.1200/CCI.19.00042} and
Waterbird~\citep{sagawa2019distributionally}.
In MNIST and BeerReview, we inject spurious feature to the input (background color for MNIST and pseudo token for BeerReview).
In ASK2ME and Waterbird, spurious feature corresponds to an attribute of the input (\texttt{breast\_cancer} for ASK2ME and \texttt{background} for Waterbird).

For each dataset, we consider multiple tasks and study the transfer between these tasks.  Specifically, for each task, we split its data into four environments: $E_1^{\text{train}}, E_2^{\text{train}}, E^{\text{val}}, E^{\text{test}}$, where spurious correlations vary across the two training environments $E_1^{\text{train}}, E_2^{\text{train}}$. 
For the source task $S$, the model can access both of its training environments $E_1^{\text{train}}(S), E_2^{\text{train}}(S)$.
For the target task $T$, the model only has access to one training environment $E_1^{\text{train}}(T)$. We note that the validation set $E^{\text{val}}(T)$ plays an important role in early-stopping and hyper-parameter tuning, especially when the distribution of the data is different between training and testing~\citep{gulrajani2020search}.  In this work, since we don't have access to multiple training environments on the target task, we assume that the validation data $E^{\text{val}}$ follows the same distribution as the training data $E^{\text{train}}_1$.  Table~\ref{tab:data} summarizes the level of the spurious correlations for different tasks.  Additional details can be found in Appendix~\ref{app:dataset}.

\begin{table*}[t]
  \centering
  \caption{Worst-group and average-group accuracy on CelebA. 
  The source task is to predict \texttt{Eyeglasses} and the target task is to predict \texttt{BlondHair}. We use the attribute \texttt{Young} to define two environments: $E_1=\{\texttt{Young}=0\}$, $E_2=\{\texttt{Young}=1\}$. Both environments are available in the source task. In the target task, we only have access to $E_1$ during training and validation..
  We show the results for the first 3 attributes (alphabetical order). The right-most Average$^*$ column is computed based on the performance across all 38 attributes. See Appendix~\ref{appendix:full} for full results.
  }\label{tab:celeba}
  \small
  \begin{tabular}{clcccccccc}
  \toprule
  &\multirow{2}{*}{\textsc{method}} & \multicolumn{2}{c}{\small\texttt{ArchedEyebrows}} & 
  \multicolumn{2}{c}{\small\texttt{Attractive}} &
  \multicolumn{2}{c}{\small\texttt{BagsUnderEyes}} &
  \multicolumn{2}{c}{Average$^*$} \\
  \cmidrule(lr{0.5em}){3-4}\cmidrule(lr{0.5em}){5-6} \cmidrule(lr{0.5em}){7-8} \cmidrule(lr{0.5em}){9-10}
  & & Worst & Average & Worst & Average & Worst & Average & Worst & Average \\\midrule
&\textsc{erm} & 75.43 & 88.52 & 75.00 & 88.94 & 70.91 & 87.09 & 61.01 & 85.07 \\
\midrule\midrule
\hspace{-1mm}\multirow{9}{*}{\STAB{\rotatebox[origin=c]{90}{\textsc{transfer}\hspace{12mm}}}}\hspace{-1mm}
&$\textsc{reuse}_{\textsc{pi}}$& 53.71 & 64.05 & 52.13 & 64.85 & 52.50 & 66.88 & 47.58 & 64.14 \\\cmidrule{2-10}
&$\textsc{reuse}_{\textsc{dann}}$ & 59.56 & 72.44 & 62.03 & 72.26 & 64.58 & 73.83 & 55.27 & 72.31 \\\cmidrule{2-10}
&$\textsc{reuse}_{\textsc{c-dann}}$ & 56.02 & 67.07 & 57.78 & 67.90 & 57.50 & 68.33 & 53.22 & 68.56 \\\cmidrule{2-10}
&$\textsc{reuse}_{\textsc{mmd}}$ & 48.91 & 59.80 & 48.46 & 61.51 & 58.74 & 63.11 & 50.61 & 61.27 \\\cmidrule{2-10}
&$\textsc{finetune}_{\textsc{pi}}$ & 71.86 & 87.02 & 72.73 & 87.34 & 62.50 & 84.10 & 63.07 & 85.27 \\\cmidrule{2-10}
&$\textsc{finetune}_{\textsc{dann}}$ & 65.38 & 83.89 & 63.35 & 84.98 & 56.86 & 81.34 & 50.60 & 80.49 \\\cmidrule{2-10}
&$\textsc{finetune}_{\textsc{c-dann}}$ & 73.85 & 88.90 & 75.61 & 89.39 & 75.86 & 88.14 & 62.03 & 85.57 \\\cmidrule{2-10}
&$\textsc{finetune}_{\textsc{mmd}}$ & 76.07 & 88.80 & 74.33 & 89.74 & 78.57 & 88.61 & 66.80 & 86.81 \\\cmidrule{2-10}
&\textsc{multitask} & 69.66 & 86.91 & 72.73 & 87.44 & 70.00 & 85.21 & 64.37 & 85.21 \\\midrule\midrule
\hspace{-1mm}\multirow{5}{*}{\STAB{\rotatebox[origin=c]{90}{\textsc{auto-debias}\hspace{2mm}}}}\hspace{-1mm}
&\textsc{eiil} & 64.71 & 85.12 & 64.43 & 85.96 & 66.67 & 83.90 & 57.62 & 83.22 \\\cmidrule{2-10}
&\textsc{george} & 74.73 & 87.89 & 73.66 & 87.70 & 77.78 & 87.97 & 63.34 & 85.04 \\\cmidrule{2-10}
&\textsc{lff} & 45.41 & 60.23 & 47.67 & 60.16 & 42.59 & 60.72 & 42.52 & 62.04 \\\cmidrule{2-10}
&\textsc{m-ada} & 64.61 & 83.33 & 67.33 & 83.59 & 70.34 & 85.34 & 54.55 & 80.77 \\\cmidrule{2-10}
&\textsc{dg-mmld} & 69.51 & 87.38 & 68.42 & 87.50 & 63.41 & 84.78 & 55.69 & 83.51 \\\midrule\midrule
&\textsc{tofu} & \textbf{85.66} & \textbf{91.47} & \textbf{88.30} & \textbf{92.76} & \textbf{90.38} & \textbf{92.41} & \textbf{84.86} & \textbf{91.71} \\
\bottomrule
  \end{tabular}
\end{table*}

\paragraph{Natural environments}
We also consider a practical setting where environments are directly defined by a given attribute of the input, and our goal is to reduce model biases from other latent attributes.
We study CelebA~\citep{liu2015deep} where each input (an image of a human face) is annotated with 40 binary attributes. The source task is to predict the \texttt{Eyeglasses} attribute and the target task is to predict the \texttt{BlondHair} attribute. We use the \texttt{Young} attribute to define two environments: $E_1=\{\texttt{Young}=0\}$ and $E_2=\{\texttt{Young}=1\}$. In the source task, both environments are available. In the target task, we only have access to environment $E_1$ during training and validation. At test time, we evaluate the robustness of our target classifier against other latent attributes. Specifically, for each unknown attribute such as \texttt{Male}, we partition the testing data into four groups: $\{\texttt{Male}=1, \texttt{BlondHair}=0\}$, $\{\texttt{Male}=0, \texttt{BlondHair}=0\}$, $\{\texttt{Male}=1, \texttt{BlondHair}=1\}$, $\{\texttt{Male}=0, \texttt{BlondHair}=1\}$. Following \cite{sagawa2019distributionally}, We report the worst-group accuracy and the average-group accuracy.

\subsection{Baselines}
  We compare our algorithm against the following baselines.
  For fair comparison, all methods share the same representation backbone and hyper-parameter search space.
  Implementation details are available at Appendix~\ref{app:details}.
  
  \paragraph{\textsc{erm} baseline}
   We learn a classifier on the target task from scratch by minimizing the average loss across all examples. Note that this classifier is independent of the source task. Its performance reflects the deviation between the training distribution and the testing distribution of the target task.

  \paragraph{Transfer methods}
    Since the source task contains multiple environments, we can learn a stable model on the source task and transfer it to the target task. We use four algorithms to learn the source task:
    \textsc{dann}~\citep{ganin2016domain}, \textsc{c-dann}~\citep{li2018domain2},
    \textsc{mmd}~\citep{li2018domain},
    \textsc{pi}~\citep{yujia2021predict}.
    We consider three standard methods for transferring the source knowledge:
    
    \textsc{reuse}: We directly transfer the feature extractor of the source model to the target task. The feature extractor is fixed when learning the target classifier.
    
    
    \textsc{finetune}: We update the feature extractor when training the target classifier. \citep{Shafahi2020Adversarially} has shown that \textsc{finetune} may improve adversarial robustness of the target task.
  
    \textsc{multitask}: We adopt the standard multi-task learning approach~\citep{caruana1997multitask} where the source model and the target model share the same feature extractor and are jointly trained together.

  \paragraph{Automatic de-biasing methods}
    For the target task, we can also apply de-biasing approaches that do not require environments. We consider the following baselines:
    
    \textsc{eiil}~\citep{creager2021environment}: Based on a pre-trained ERM classifier's prediction logits, we infer an environment assignment that maximally violates the invariant learning principle~\citep{arjovsky2019invariant}. We then apply group DRO to minimize the worst-case loss over all inferred environments.

    \textsc{george}~\citep{sohoni2020no}: We use the feature representation of a pre-trained ERM classifier to cluster the training data. 
    We then apply group DRO to minimize the worst-case loss over all clusters.
    
    \textsc{lff}~\citep{nam2020learning}: We train a biased classifier together with a de-biased classifier. The biased classifier amplifies its bias by minimizing the generalized cross entropy loss. The de-biased classifier then up-weights examples that are mis-labeled by the biased classifier during training.
    
    \textsc{m-ada}~\citep{qiao2020learning}: We use a Wasserstein auto-encoder to generate adversarial examples. The de-biased classifier is trained on both the original examples and the generated examples.
    
    \textsc{dg-mmld}~\citep{matsuura2020domain}: We iteratively divide target examples into latent domains via clustering, and train the domain-invariant feature extractor via adversarial learning.

  \paragraph{\textsc{oracle}}
  For synthetic environments, we can use the spurious feature to define groups and train an oracle model. For example, in task \textsc{seabird}, this oracle model will minimize the worst-case risk over the following four groups:
  \{\texttt{seabird} in \texttt{water}\}, \{\texttt{seabird} in \texttt{land}\}
  \{\texttt{landbird} in \texttt{water}\}, \{\texttt{landbird} in \texttt{land}\}.
  This oracle model helps us analyze the performance of our proposed algorithm separately from the inherent limitations (such as model capacity and data size).

\begin{figure}[t]
        \centering
        \includegraphics[width=0.6\linewidth]{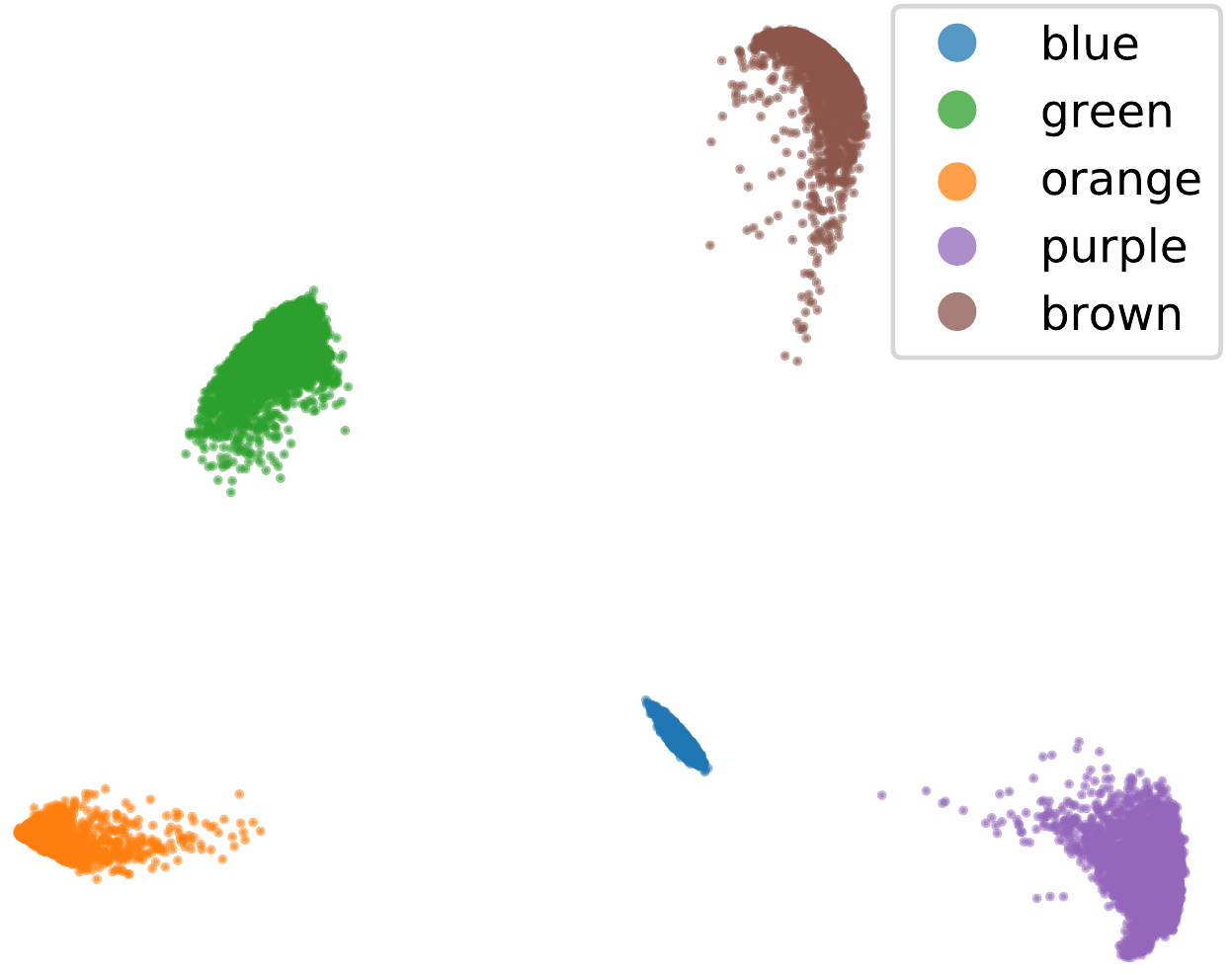}
        \caption{PCA visualization of the unstable feature representation $f_\mathcal{Z}$ for examples in \textsc{mnist} \textsc{even}. $f_\mathcal{Z}$ is trained on \textsc{mnist} \textsc{odd}. \textsc{tofu} identifies the hidden spurious color feature by contrasting different source environments.}
        \label{fig:featurepca}
\end{figure}

\begin{figure}[t]
        \centering
        \includegraphics[width=\linewidth]{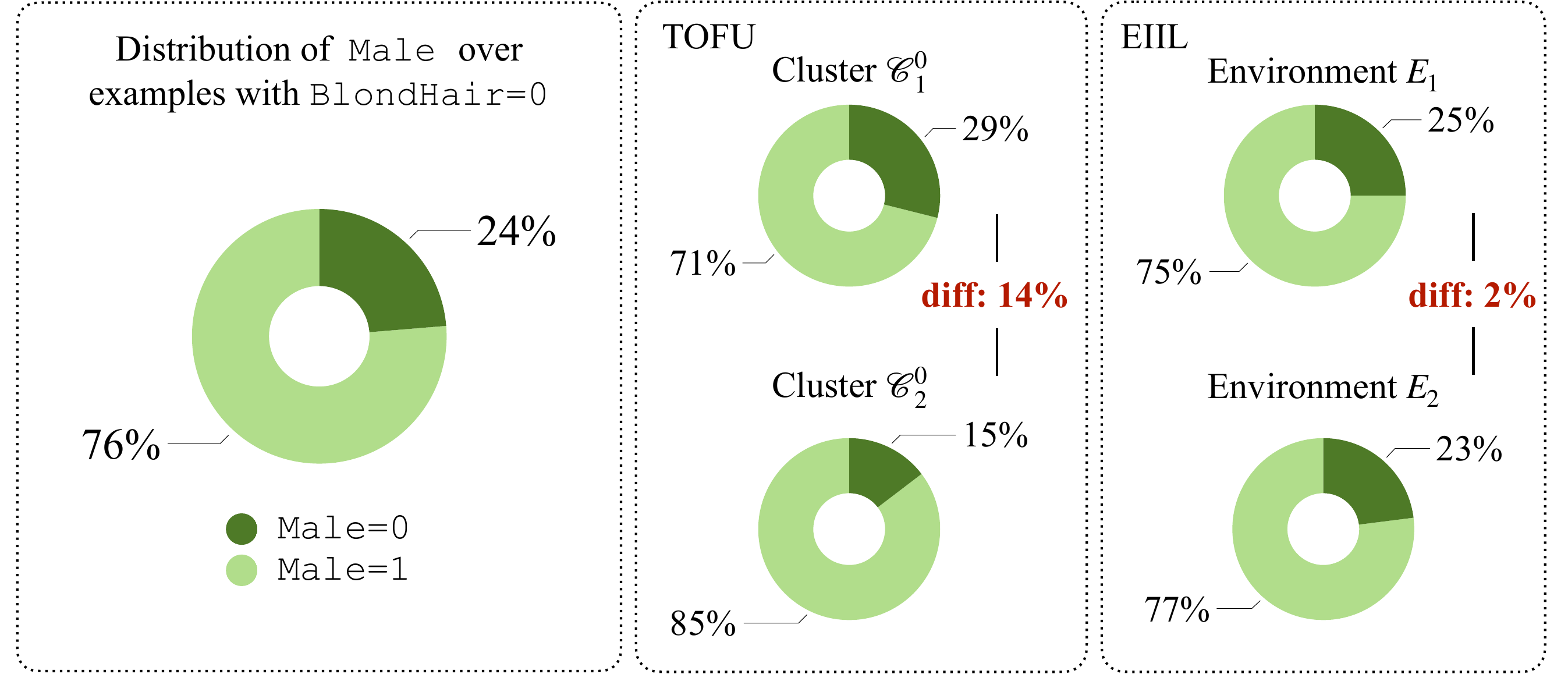}
        \caption{Visualization of the unknown attribute \texttt{Male} on CelebA. Left:
        distributions of \texttt{Male} in the training data. Mid: partitions learned by \textsc{tofu}. Right: partitions learned by \textsc{eiil}. \textsc{tofu} generates partitions that are more informative of the unknown attribute (14\% vs. 2\%). 
        See Appendix~\ref{appendix:full} for results on more attributes.
        }
        \label{fig:visualceleba}
\end{figure}

\begin{table}[t]
  \centering 
  \caption{Quantitative evaluation of the generated clusters against the ground truth unstable features. {For comparison, the \textsc{triplet} baseline (\textsc{trp}) directly encourages source examples with the same label to stay close to each other in the feature space, from which we generate the clusters.} For both methods, we generate two clusters for each target label value and report the average performance across all label values. We observe that the \textsc{triplet} representation, while biased by the spurious correlations, fails to recover the ground truth unstable features for some tasks. By explicitly contrasting the source environments, \textsc{tofu} derives clusters that are highly-informative of the unstable features.}
  \label{tab:cluster}
  \small
  \begin{tabular}{cccccccc}
  \toprule
  \multirow{2}{*}{\hspace{-3mm}\textsc{source}\hspace{-3mm}} & \multirow{2}{*}{\hspace{-3mm}\textsc{target}\hspace{-3mm}} & \multicolumn{2}{c}{Homo.} & \multicolumn{2}{c}{Complete.} & \multicolumn{2}{c}{V-measure} \\ \cmidrule(lr{0.5em}){3-4}\cmidrule(lr{0.5em}){5-6}\cmidrule(lr{0.5em}){7-8}
  & & \textsc{trp} & \hspace{-2mm}\textsc{tofu}\hspace{-2mm} & \textsc{trp} & \hspace{-2mm}\textsc{tofu}\hspace{-2mm}  & \textsc{trp} & \hspace{-2mm}\textsc{tofu}\hspace{-2mm}  \\
  \midrule
 \textsc{odd} & \textsc{even} & 0.42 & \textbf{0.68} & 0.58 & \textbf{0.95} & 0.49 & \textbf{0.79}\\ \midrule
 \textsc{even} & \textsc{odd} & 0.67 & \textbf{0.67} & 0.93 & \textbf{0.99} & 0.78 & \textbf{0.80}  \\ \midrule
\textsc{look}    & \hspace{-3mm}\textsc{aroma}\hspace{-3mm}  & 0.33 & \textbf{0.92} & 0.28 & \textbf{0.92} & 0.30 & \textbf{0.92} \\ \midrule
\textsc{look}    & \hspace{-3mm}\textsc{palate}\hspace{-3mm} & 0.33 & \textbf{0.90} & 0.27 & \textbf{0.89} & 0.30 & \textbf{0.90} \\ \midrule
\hspace{-3mm}\textsc{aroma}\hspace{-3mm}   & \textsc{look}   & 0.33 & \textbf{1.00} & 0.28 & \textbf{1.00} & 0.30 & \textbf{1.00} \\ \midrule
\hspace{-3mm}\textsc{aroma}\hspace{-3mm}   & \hspace{-3mm}\textsc{palate}\hspace{-3mm} & 0.82 & \textbf{1.00} & 0.77 & \textbf{1.00} & 0.79 & \textbf{1.00} \\ \midrule
\hspace{-3mm}\textsc{palate}\hspace{-3mm}  & \textsc{look}   & 0.83 & \textbf{0.98} & 0.77 & \textbf{0.98} & 0.80 & \textbf{0.98} \\ \midrule
\hspace{-3mm}\textsc{palate}\hspace{-3mm}  & \hspace{-3mm}\textsc{aroma}\hspace{-3mm}  & 0.82 & \textbf{0.95} & 0.77 & \textbf{0.95} & 0.79 & \textbf{0.95} \\
\bottomrule
  \end{tabular}
\end{table}

\section{Results}\label{sec:results}
Table~\ref{tab:big} summarizes our results on synthetic environments. We observe that standard transfer methods fail to improve over the \textsc{erm} baseline. On the other hand, \textsc{tofu} consistently achieves the best performance across 12 transfer settings, outperforming the best baseline by 22.9\%. While \textsc{tofu} doesn't have access to the unstable features, by inferring them from the source environments, it matches the oracle performance with only 0.30\% absolute difference.

Table~\ref{tab:celeba} presents our results on natural environments. This task is very challenging as there are multiple latent spurious attributes in the training data. We observe that most automatic de-biasing methods underperform the \textsc{erm} baseline. With the help of the source task, \textsc{finetune} and \textsc{multitask} achieve slightly better performance than \textsc{erm}. \textsc{tofu} continues to shine in this real-world setting: achieving the best worst-group and average-group performance.


\textbf{\emph{Is \textsc{tofu} able to identify the unstable features?}} Yes. For synthetic environments, we visualize the unstable feature representation produced by $f_\mathcal{Z}$ on \textsc{mnist even}. Figure~\ref{fig:featurepca} demonstrates that while $f_\mathcal{Z}$ only sees source examples (\textsc{odd}) during training, it can distribute target examples based on their unstable color features.

For natural environments, we visualize the distribution of two latent attributes (\texttt{Male} and \texttt{ArchedEyebrows}) over the generated clusters. Figure~\ref{fig:visualceleba} shows that the distribution gap of the unknown attribute \texttt{Male} across the generated partitions is 2\% for \textsc{eiil}, only marginally better than random splitting (0\%). By leveraging information from the source task, \textsc{tofu} learns partitions that are more informative of the unknown attribute (14\%).


\textbf{\emph{How do the generated clusters compare to the oracle groups?}} 
We quantitatively evaluate the generated clusters based on three metrics: \emph{homogeneity}
(each cluster contain only examples with the same unstable feature value),
\emph{completeness} (examples with the same unstable feature value belong to the same cluster),
and \emph{V-measure} (the harmonic mean of homogeneity and completeness).
From Table~\ref{tab:cluster}, we see that \textsc{tofu} is able to derive clusters that resemble the oracle groups on \textsc{beer review}.
In \textsc{mnist}, since we generate two clusters for each label value and there are five different colors, it is impossible to recover the oracle groups. However, \textsc{tofu} still achieves almost perfect completeness.
\section{Conclusion}\label{sec:conclusion}

Reducing model bias is a critical problem for many machine learning applications in the real world.
In this paper, we recognize that bias is a human-defined concept. Without additional knowledge, automatic de-biasing methods cannot effectively distinguish causal features from spurious features.
The main departure of this paper is to identify bias by using related tasks.
We demonstrate that when the source task and target task share the same set of biases, we can effectively transfer this knowledge and improve the robustness of the target model without additional human intervention.
Compared with 15 baselines across 5 datasets, our approach consistently delivers significant performance gain. Quantitative and qualitative analyses confirm that our method is able to identify the hidden biases.
Due to space limitations, we leave further discussions to Appendix~\ref{app:discussion}.

\section*{Acknowledgement}
This paper is dedicated to the memory of our beloved family member Tofu, who filled our lives with so many wuffs and wuvs.\looseness=-1

We want to thank Menghua Wu, Bracha Laufer, Adam Yala, Adam Fisch, Tal Schuster, Yujie Qian, Victor Quach and Jiang Guo for their thoughtful feedback.

Research was sponsored by the United States Air Force Research Laboratory and the United States Air Force Artificial Intelligence Accelerator and was accomplished under Cooperative Agreement Number FA8750-19-2-1000. The views and conclusions contained in this document are those of the authors and should not be interpreted as representing the official policies, either expressed or implied, of the United States Air Force or the U.S. Government. The U.S. Government is authorized to reproduce and distribute reprints for Government purposes notwithstanding any copyright notation herein.

The research, supported by the Defence Science and Technology Agency, was also funded by the Wellcome Trust (grant reference number: 222396/Z/21/Z, \emph{Building Trustworthy Clinical AI: from algorithms to clinical deployment}).

The research was also sponsored by the MIT Jameel Clinic.

\bibliography{ref}
\bibliographystyle{icml2022}

\newpage
\onecolumn

\appendix

\section{Discussion}
\label{app:discussion}
\paragraph{Are biases shared across real-world tasks?}
In this paper, we show that for tasks where the biases are shared, we can effectively transfer this knowledge to obtain a more robust model. This assumption holds in many real world applications. For example, in natural language processing, the same gender bias exist across many tasks including relation extraction~\citep{gaut2020towards}, semantic role labeling~\citep{jia2020mitigating}, abusive language detection~\citep{park2018reducing} and sentiment analysis~\citep{kiritchenko2018examining}. In computer vision, the same geographical bias exists across different object recognition benchmarks such as ImageNet, COCO and OpenImages~\citep{de2019does}.

When a single source task does not describe all unwanted unstable features, we can leverage multiple source tasks and compose their individual unstable features together. We can naturally extend \textsc{tofu} to accomplish this goal by learning the unstable feature representation jointly across this collection of source tasks. We focus on the basic setting in our main paper and leave this extension to Appendix~\ref{app:multi_source}.

Our approach is not applicable in situations where the biases in the source task and the target task are completely disjoint.

\paragraph{What if the source task and target task are from different domains?}
In this paper, we focus on the setting where the source task and the target task are from the same domain. If the target task is drawn from a different domain, we can use domain-adversarial training to align the distributions of the unstable features across the source domain and the target domain~\citep{li2018domain,li2018domain}. Specifically, when training the unstable feature representation $f_z$, we can introduce an adversarial player that tries to guess the domain label from $f_z$. The representation $f_z$ is updated to fool this adversarial player in addition to minimize the triplet loss in Eq~\eqref{eq:hinge}. We leave this extension to future work.

\paragraph{Can we apply domain-invariant representation learning (DIRL) directly to the source environments?}
Domain invariant representation learning~\citep{ganin2016domain, li2018domain2, li2018domain} aims to match the feature representations across domains. If we directly treat environments as domains and apply these methods, the resulting representation may still encode unstable features.

For example, in CelebA, the attribute \texttt{Male} is spuriously correlated with the target attribute \texttt{BlondHair} (Women are more likely to have blond hair than men in this dataset). Given the two environments $\{\texttt{Young}=0\}$ and \{\texttt{Young}=1\}, DIRL learns an age-invariant representation. However, if the distribution of \texttt{Male} is the same across the two environments, DIRL will encode this attribute into the age-invariant representation (since it is helpful for predicting the target \texttt{BlondHair} attribute).
In our approach, we realize that the the correlations between \texttt{Male} and \texttt{BlondHair} are different in the two environments (The elderly may have more white hair). Even though the distribution of \texttt{Male} may be the same, we can still identify this bias from the classifiers’ mistakes.
Empirically, Table~\ref{tab:celeba} shows that while DIRL methods improve over the ERM baseline, they still perform poorly on minority groups (worst case acc 66.80\% on CelebA).

\paragraph{What if the mistakes correspond to other factors such as label noise, distribution shifts, etc.?}
For ease of analysis, we do not consider label noise and distribution shift in Theorem~\ref{thm:1}. One future direction is to model bias from the information perspective (rather than looking at the linear correlations). This will enable us to relax the assumption in the analyses and we can further incorporate these different mistake factors into the modeling.

We note that we do not impose this assumption in our empirical experiments. For example, we explicitly added label noise into the \textsc{mnist} data. In \textsc{celeba}, there is a distribution gap (from young people to the elderly) across the two environments. We observe that our method is able to perform robustly in situations where the assumption breaks.

\paragraph{Is the algorithm efficient when multiple source environments are available?}
Our method can be generalized efficiently to multiple environments. 
Given $N$ source environments, we first note that the complexities of the target steps \textbf{T.1} and \textbf{T.2} are independent of $N$.
For the source task, the $N$ environment-specific classifiers (in \textbf{S.1}) can be learned jointly with multi-task learning~\citep{caruana1997multitask}. This significantly reduces the inference cost at \textbf{S.2} as we only need to pass each input example through the (expensive) representation backbone for one time. In \textbf{S.3}, we sample partitions when minimizing the triplet loss, so there is no additional cost during training. In this paper, we focus on the two-environments setup for simplicity and leave this generalization to future work.

\paragraph{Why does the baselines perform so poorly on \textsc{mnist}?}
We note that the representation backbone (a 2-layer CNN) on MNIST is trained from scratch while we use pre-trained representations for other datasets (see Appendix~\ref{app:details}). Our hypothesis is that models are more prune to spurious correlations when trained from scratch.

\section{Theoretical analysis}\label{app:theory}

\subsection{Partitions reveal the unstable correlation}
We start by reviewing the results in  \cite{yujia2021predict} which shows that the generated partitions reveal the unstable correlation. We consider binary classification tasks where $\mathcal{Y} \in \{0, 1\}$. For a given input $x$, we use $\mathcal{C}(x)$ to represent its stable (causal) feature and $\mathcal{Z}(x)$ to represent its unstable feature.
In order to ease the notation, if no confusion arises, we omit the dependency on $x$.
We use lowercase letters $c, z, y$ to denote the specific values of $\mathcal{C}, \mathcal{Z}, \mathcal{Y}$.
\begin{prop}
  \label{prop:1}
  For a pair of environments $E_i$ and $E_j$,
  assuming that the classifier $f_i$ is able to learn the true conditional
  $P_i(\mathcal{Y}\mid C, \mathcal{Z})$,
  we can write the joint distribution $P_j$ of $E_j$ as the mixture of
  $P_j^{i \checkmark}$ and $P_j^{i \times}$:
  \[
    P_j(c, z, y) = \alpha_j^i P_j^{i \checkmark} (c, z, y) +
    (1-\alpha_j^i)  P_j^{i \times} (c, z, y),
  \]
  where $\alpha_j^i = \sum_{c, z, y} P_j(c, z, y)\cdot P_i(y \mid c, z)$ and
  \[
    \begin{aligned}
      P_j^{i\checkmark}(c, z, y) &\propto
      P_j(c, z, y)\cdot P_i(y \mid c, z),\\
      P_j^{i\times}(x, z, y) &\propto
      P_j(c, z, y)\cdot P_i(1-y \mid c, z).
    \end{aligned}
  \]
\end{prop}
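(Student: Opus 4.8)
The plan is to read the prediction of $f_i$ on a sample from $E_j$ as a \emph{stochastic} event and then recover the correct/incorrect partition by a Bayes-rule conditioning. Concretely, I would model a single draw as a two-stage process: first sample $(c,z,y)\sim P_j$, then let $f_i$ emit a predicted label $\hat y$ according to the learned conditional, so that $\hat y=\tilde y$ with probability $P_i(\tilde y\mid c,z)$. The event ``correct prediction'' is $\{\hat y=y\}$ and, since $\mathcal{Y}\in\{0,1\}$, the event ``incorrect prediction'' is exactly $\{\hat y=1-y\}$. These two events are complementary, which is precisely what will collapse the mixture back to $P_j$ at the end.

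First I would compute the joint probability of drawing a particular triple $(c,z,y)$ \emph{and} predicting correctly. By the chain rule this is $P_j(c,z,y)\,P_i(y\mid c,z)$, and summing over all triples gives the total mass of the correct-prediction event, namely $\alpha_j^i=\sum_{c,z,y}P_j(c,z,y)\,P_i(y\mid c,z)$. Conditioning on that event then yields
\[
P_j^{i\checkmark}(c,z,y)=\frac{P_j(c,z,y)\,P_i(y\mid c,z)}{\alpha_j^i},
\]
which is the claimed proportionality with the normalizer made explicit. The identical argument applied to the complementary event gives $P_j^{i\times}(c,z,y)=P_j(c,z,y)\,P_i(1-y\mid c,z)/(1-\alpha_j^i)$, using that the incorrect event carries mass $1-\alpha_j^i$ (which itself follows from the binary normalization below).

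To close the argument I would substitute these two conditionals into the right-hand side of the claimed decomposition. The normalizers cancel against the mixture weights $\alpha_j^i$ and $1-\alpha_j^i$, leaving $P_j(c,z,y)\big(P_i(y\mid c,z)+P_i(1-y\mid c,z)\big)$; the bracketed term equals $1$ because in binary classification the two label probabilities are complementary, so the expression reduces to $P_j(c,z,y)$. This is exactly the law of total probability, splitting $P_j$ across the correct/incorrect partition of the prediction outcome.

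The computation itself is routine; the only genuine subtlety — and the step I would be most careful about — is the modeling choice in the first paragraph. The proportionality definitions of $P_j^{i\checkmark}$ and $P_j^{i\times}$ hold verbatim only if the prediction is read as a \emph{soft} sample from $P_i(\cdot\mid c,z)$ rather than as a hard $\arg\max$ decision; under a deterministic MAP rule the factor $P_i(y\mid c,z)$ would collapse to an indicator and the decomposition would no longer match the stated form. I would therefore state this probabilistic-prediction semantics explicitly as the operating assumption (consistent with the ``same assumption'' inherited from \cite{yujia2021predict}), after which every remaining step is just conditioning together with the binary identity $P_i(y\mid c,z)+P_i(1-y\mid c,z)=1$.
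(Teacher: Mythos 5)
Your proof is correct and is essentially the argument underlying the stated result: the paper itself gives no proof and defers to \cite{yujia2021predict}, where the decomposition is obtained by exactly your Bayes-conditioning computation, treating the prediction $\hat y$ as sampled from the learned conditional $P_i(\cdot \mid c, z)$, conditioning on the events $\{\hat y = y\}$ and $\{\hat y = 1-y\}$, and recombining via the binary identity $P_i(y \mid c,z) + P_i(1-y \mid c,z) = 1$. Your explicit flagging of the soft-prediction semantics (as opposed to a hard $\arg\max$ rule, under which the multiplicative form would collapse to indicators) is the correct reading of the inherited assumption and is the one point on which the statement's validity genuinely hinges.
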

\begin{proof}
See \cite{yujia2021predict}.
\end{proof}
Proposition~\ref{prop:1} tells us that if $f_i$ is powerful enough to capture the true conditional in $E_i$, partitioning the environment $E_j$ is equivalent to scaling its joint distribution based on the conditional on $E_i$.

Now suppose that the marginal distribution of
$\mathcal{Y}$ is uniform in all joint distributions,
i.e., $f_i$ performs equally well on different labels. \citet{yujia2021predict} shows that the unstable correlations will have different signs in the subset of correct predictions and in the subset of incorrect predictions.

\begin{prop}\label{prop:2}
  Suppose $\mathcal{Z}$ is independent of $\mathcal{C}$ given $\mathcal{Y}$.
  For any environment pair $E_i$ and $E_j$,
  if $\sum_y P_i(z \mid y) = \sum_y P_j(z \mid y)$ for any $z$,
  then $\mathrm{Cov}(\mathcal{Z}, \mathcal{Y}; P_i) > \mathrm{Cov}(\mathcal{Z}, \mathcal{Y}; P_j)$ implies
  \[
  \mathrm{Cov}(\mathcal{Z}, \mathcal{Y}; P_j^{i\times}) < 0, 
  \quad\text{and}\quad
  \mathrm{Cov}(\mathcal{Z}, \mathcal{Y}; P_i^{j\times}) > 0.
  \]
\end{prop}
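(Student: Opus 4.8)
The plan is to reduce both sign claims to a single monotonicity statement about the \emph{spurious log-odds gap} of each environment, and then to read off the two signs from the hypothesis $\mathrm{Cov}(\mathcal{Z},\mathcal{Y};P_i) > \mathrm{Cov}(\mathcal{Z},\mathcal{Y};P_j)$ together with the matching-marginal condition. First I would record the elementary identity that, for a binary label, $\mathrm{Cov}(\mathcal{Z},\mathcal{Y};P) = \pi(1-\pi)\,(\mu_P(1)-\mu_P(0))$, where $\pi = P(\mathcal{Y}{=}1)$ and $\mu_P(y) = \mathbb{E}_P[\mathcal{Z}\mid \mathcal{Y}{=}y]$. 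Since $\pi\in(0,1)$, the sign of every covariance in the statement equals the sign of the conditional-mean gap $\mu_P(1)-\mu_P(0)$; this is convenient because I never need the error subsets to be label-balanced. The two conclusions thus become $\mu_{P_j^{i\times}}(1) < \mu_{P_j^{i\times}}(0)$ and $\mu_{P_i^{j\times}}(1) > \mu_{P_i^{j\times}}(0)$.

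Next I would compute the error-subset conditional of $\mathcal{Z}$. By Proposition~\ref{prop:1}, $P_j^{i\times}(c,z,y)\propto P_j(c,z,y)\,P_i(\mathcal Y{=}1-y\mid c,z)$; marginalizing $c$ and using $\mathcal{Z}\perp\mathcal{C}\mid\mathcal{Y}$ gives $P_j^{i\times}(z\mid y)\propto P_j(z\mid y)\,r_i^{(y)}(z)$, where $r_i^{(y)}(z)=\mathbb{E}_{c\sim P_j(\cdot\mid y)}[P_i(\mathcal Y{=}1-y\mid c,z)]$ is the averaged error probability at $(z,y)$. The same conditional independence, together with the uniform label marginal assumed just before the statement, makes the true log-odds additive, $P_i(\mathcal Y{=}1\mid c,z)=\sigma(s_i(c)+t_i(z))$ with $t_i(z)=\log\frac{P_i(z\mid1)}{P_i(z\mid0)}$; because $E_i$ carries the stronger spurious correlation, $r_i^{(1)}$ is decreasing and $r_i^{(0)}$ increasing in $z$. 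This reweighting tilts $\mathcal{Z}\mid \mathcal{Y}{=}1$ toward small values and $\mathcal{Z}\mid \mathcal{Y}{=}0$ toward large values, which already fixes the \emph{direction} of the effect; the remaining work is to show the tilt crosses zero.

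For the quantitative step I would specialize to the tractable case where the relevant conditional depends on $z$ only (e.g.\ binary $\mathcal{Z}$, writing $a=P(\mathcal Z{=}1\mid \mathcal Y{=}1)$ and $b=P(\mathcal Z{=}1\mid \mathcal Y{=}0)$ in each environment). A direct computation of $P_j^{i\times}(z\mid y)$ then shows that $\mathrm{Cov}(\mathcal{Z},\mathcal{Y};P_j^{i\times})<0$ holds \emph{precisely when} $\ell(a_i)-\ell(b_i) > \ell(a_j)-\ell(b_j)$, with $\ell(p)=\log\frac{p}{1-p}$; that is, the sign is governed purely by which environment has the larger spurious log-odds gap. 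The matching-marginal condition $\sum_y P_i(z\mid y)=\sum_y P_j(z\mid y)$ reduces (under the uniform label marginal) to $a_i+b_i=a_j+b_j$, so I can write $a=S+d,\ b=S-d$ with a common $S$ and $d=(a-b)/2$, and the covariance hypothesis becomes exactly $d_i>d_j$. Setting $h(d):=\ell(S+d)-\ell(S-d)$, one has
\[
h'(d)=\frac{1}{(S+d)(1-S-d)}+\frac{1}{(S-d)(1-S+d)}>0,
\]
so $h$ is strictly increasing and $d_i>d_j$ yields $\ell(a_i)-\ell(b_i)=h(d_i)>h(d_j)=\ell(a_j)-\ell(b_j)$, hence $\mathrm{Cov}(\mathcal{Z},\mathcal{Y};P_j^{i\times})<0$. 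The second conclusion is the same computation with $i,j$ interchanged: the governing inequality is reversed, $\ell(a_j)-\ell(b_j)<\ell(a_i)-\ell(b_i)$, which gives $\mathrm{Cov}(\mathcal{Z},\mathcal{Y};P_i^{j\times})>0$. Thus both claims fall out of the single monotone quantity $h$, the matching condition supplying the shared $S$ and the covariance hypothesis supplying $d_i>d_j$.

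The hard part will be the causal feature $\mathcal{C}$: the additive log-odds identity survives the conditional independence, but integrating $c$ out of $r_i^{(y)}(z)$ destroys the exact reciprocal-marginal form that makes the binary computation collapse to a clean log-odds-gap comparison. I would handle this either by (i) carrying the argument through under the additional invariance that the causal part $s_i(c)$ is shared across the two environments, so that the $c$-average factors out of the sign computation and the reduction above applies verbatim, or (ii) retaining the general $r_i^{(y)}$ and arguing that its monotonicity in $z$, combined with the matched $\mathcal{Z}$-marginal, still forces the crossing of zero. The delicate point in (ii) is exactly the asymmetry encoded by $h'>0$: the strong classifier on the weak environment over-corrects past zero (giving the strict $<0$), while the weak classifier on the strong environment under-corrects and stays positive (giving the strict $>0$), and one must verify that the averaging over $c$ cannot wash out this gap.
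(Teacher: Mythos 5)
You should know at the outset that the paper itself contains no proof of Proposition~\ref{prop:2}: its ``proof'' is literally the citation ``See \cite{yujia2021predict}'', so the only in-paper material to compare against is how the surrounding results (Corollary~\ref{cor:1}, Theorem~\ref{thm:1}) consume it. Within that caveat, the part of your argument you actually execute is correct, and in one respect cleaner than the paper's own downstream usage: the identity $\mathrm{Cov}(\mathcal{Z},\mathcal{Y};P)=\pi(1-\pi)\bigl(\mu_P(1)-\mu_P(0)\bigr)$ lets you read off covariance signs without assuming label balance inside the error partitions, whereas the paper's Corollary~\ref{cor:1} proof simply \emph{assumes} a uniform label marginal within each partition. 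Your binary-$\mathcal{Z}$ computation is also right: with $a=P(\mathcal{Z}{=}1\mid\mathcal{Y}{=}1)$, $b=P(\mathcal{Z}{=}1\mid\mathcal{Y}{=}0)$, the sign of $\mathrm{Cov}(\mathcal{Z},\mathcal{Y};P_j^{i\times})$ is the sign of the cross-product $m_{11}m_{00}-m_{10}m_{01}$, which reduces to comparing the log-odds gaps $\ell(a_i)-\ell(b_i)$ versus $\ell(a_j)-\ell(b_j)$; the matched marginals give a common $S$, the covariance hypothesis gives $d_i>d_j$, and the strict monotonicity of $h(d)=\ell(S+d)-\ell(S-d)$ closes both sign claims, including the case $\mathrm{Cov}(\mathcal{Z},\mathcal{Y};P_j)\le 0$ (note your earlier heuristic that $r_i^{(1)}$ is decreasing tacitly assumed positive correlation in $E_i$, but your $h$-argument never uses it).

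The genuine gap is exactly the one you flagged, and your proposed repair (i) cannot work, so the proposal does not prove the proposition as stated (general $z$ in the marginal-matching condition, nontrivial $\mathcal{C}$). With $P_i(\mathcal{Y}{=}1\mid c,z)=\sigma(s_i(c)+t_i(z))$, the averaged error rates $r_i^{(y)}(z)$ are mixtures of sigmoids and do not factor into a $z$-part times a $c$-part even when $s_i$ is shared across environments, because $\sigma$ of an additive score is not multiplicatively separable; so the ``$c$-average factors out of the sign computation'' step is false. Worse, the pointwise-in-$c$ version of the needed inequality fails outright: writing $q_y(z)=P_j(z\mid y)$, $A_z=\mathbb{E}_{c\sim P_j(\cdot\mid 1)}[P_i(0\mid c,z)]$, $B_z=\mathbb{E}_{c\sim P_j(\cdot\mid 0)}[P_i(1\mid c,z)]$, the sign of $\mathrm{Cov}(\mathcal{Z},\mathcal{Y};P_j^{i\times})$ is the sign of $q_1(1)q_0(0)A_1B_0-q_0(1)q_1(0)A_0B_1$, and a term-by-term argument would need
\[
\frac{q_1(1)\,q_0(0)}{q_1(0)\,q_0(1)}\;\le\;\frac{\sigma(s'+t_1)}{\sigma(s'+t_0)}\cdot\frac{\sigma(-s-t_0)}{\sigma(-s-t_1)}
\]
for all causal scores $s=s_i(c)$, $s'=s_i(c')$, where $t_1>t_0$ are the two values of $t_i(z)$. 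The right side tends to $1$ as $s'\to+\infty$ and $s\to-\infty$ (confidently classified $c$'s), while the left side exceeds $1$ whenever $E_j$ carries any positive spurious correlation, so the inequality fails pointwise and the strict overshoot past zero survives \emph{only after} averaging over $c$ against $P_j(c\mid y)$ with the matched $\mathcal{Z}$-marginals. That averaged crossing argument --- your option (ii) --- is precisely what remains unverified in your sketch, and it is the substantive content of the result the paper outsources to \cite{yujia2021predict}. In short: a correct and tight proof of the $\mathcal{C}$-degenerate binary special case, but the reduction from the stated proposition to that case is missing, and the one concrete bridge you propose is provably a dead end.
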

\begin{proof}
See \cite{yujia2021predict}.
\end{proof}
Proposition~\ref{prop:2} implies that no matter whether the spurious correlation is positive or negative, by interpolating $P_j^{i\checkmark}, P_j^{i\times},P_i^{j\checkmark}, P_i^{j\times}$,
we can obtain an \emph{oracle} distribution where the spurious correlation between $\mathcal{Z}$ and $\mathcal{Y}$ vanishes. Since the oracle interpolation coefficients are not available in practice, \citet{yujia2021predict} propose to optimize the worst-case risk across all interpolations of the partitions.

\subsection{Partitions reveal the unstable feature}
Proposition~\ref{prop:2} shows that the partitions $E_j^{i\checkmark}, E_j^{i\times},E_i^{j\checkmark}, E_i^{j\times}$ are informative of the biases. However these partitions are not transferable as they are coupled with task-specific information, i.e., the label $\mathcal{Y}$. To untangle this dependency, we look at different label values and obtain the following result.
\begin{cor}
  \label{cor:1}
  Under the same assumption as Proposition~\ref{prop:2}, if $\mathrm{Cov}(\mathcal{Z}, \mathcal{Y}; P_i) > \mathrm{Cov}(\mathcal{Z}, \mathcal{Y}; P_j) > 0$ and $\mathcal{Z}$ follows a uniform distribution within each partition, then
  \begin{align*}
    \sum_{z} z P_j^{i\times} (\mathcal{Z}=z, \mathcal{Y}=1) &> \sum_{z} z P_j^{i\checkmark} (\mathcal{Z}=z, \mathcal{Y}=1),\\
    \sum_{z} z P_j^{i\times} (\mathcal{Z}=z, \mathcal{Y}=0) &< \sum_{z} z P_j^{i\checkmark} (\mathcal{Z}=z, \mathcal{Y}=0).\\
  \end{align*}
\end{cor}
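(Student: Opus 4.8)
The plan is to convert both claimed inequalities, which compare partial first moments of $\mathcal{Z}$ across the two partitions, into a single comparison of covariances, and then to pin down those two covariances using the results already in hand. First I would observe that because $\mathcal{Y}\in\{0,1\}$, the quantity $\sum_z z\,P(\mathcal{Z}=z,\mathcal{Y}=1)$ is exactly $\mathbb{E}_P[\mathcal{Z}\mathcal{Y}]$. Invoking the standing assumption of Proposition~\ref{prop:2} that $\mathcal{Y}$ is uniform (so $\mathbb{E}_P[\mathcal{Y}]=\tfrac12$) together with the new hypothesis that $\mathcal{Z}$ is uniform within each partition (so $\mathbb{E}_P[\mathcal{Z}]=\bar z$ takes the \emph{same} value in $P_j^{i\checkmark}$ and $P_j^{i\times}$), I can write
\[
\sum_z z\,P(\mathcal{Z}=z,\mathcal{Y}=1)=\mathrm{Cov}(\mathcal{Z},\mathcal{Y};P)+\tfrac12\bar z,
\qquad
\sum_z z\,P(\mathcal{Z}=z,\mathcal{Y}=0)=\tfrac12\bar z-\mathrm{Cov}(\mathcal{Z},\mathcal{Y};P),
\]
the second identity following because the two partial first moments sum to $\mathbb{E}_P[\mathcal{Z}]=\bar z$. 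Since the additive constant $\tfrac12\bar z$ is common to both partitions, each claimed inequality reduces to a statement about the sign of $\mathrm{Cov}(\mathcal{Z},\mathcal{Y};P_j^{i\checkmark})-\mathrm{Cov}(\mathcal{Z},\mathcal{Y};P_j^{i\times})$, and the two lines of the corollary become equivalent.

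Next I would determine the two covariances. Proposition~\ref{prop:2} applies verbatim under the hypothesis $\mathrm{Cov}(\mathcal{Z},\mathcal{Y};P_i)>\mathrm{Cov}(\mathcal{Z},\mathcal{Y};P_j)$ and gives $\mathrm{Cov}(\mathcal{Z},\mathcal{Y};P_j^{i\times})<0$. For the correct partition I would use the mixture decomposition of Proposition~\ref{prop:1}, namely $P_j=\alpha_j^i P_j^{i\checkmark}+(1-\alpha_j^i)P_j^{i\times}$. The crucial simplification is that, under the two uniformity assumptions, $\mathbb{E}[\mathcal{Z}]$ and $\mathbb{E}[\mathcal{Y}]$ are identical across $P_j^{i\checkmark}$, $P_j^{i\times}$ and their mixture $P_j$, so the product term $\mathbb{E}[\mathcal{Z}]\,\mathbb{E}[\mathcal{Y}]$ is constant and covariance becomes \emph{affine} in the mixing weight:
\[
\mathrm{Cov}(\mathcal{Z},\mathcal{Y};P_j)=\alpha_j^i\,\mathrm{Cov}(\mathcal{Z},\mathcal{Y};P_j^{i\checkmark})+(1-\alpha_j^i)\,\mathrm{Cov}(\mathcal{Z},\mathcal{Y};P_j^{i\times}).
\]
Combining the hypothesis $\mathrm{Cov}(\mathcal{Z},\mathcal{Y};P_j)>0$ with $\mathrm{Cov}(\mathcal{Z},\mathcal{Y};P_j^{i\times})<0$ then forces $\mathrm{Cov}(\mathcal{Z},\mathcal{Y};P_j^{i\checkmark})>0$, so the two partitions carry covariances of opposite sign.

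Finally I would substitute these two signs back into the identities from the first step to read off the separation between the two partial first moments for each label value, and hence the stated inequalities; the $\mathcal{Y}=0$ line comes for free from the reflection $\mu_0=\bar z-\mu_1$. The step I expect to be the main obstacle is the sign bookkeeping: one must check that the two uniformity assumptions genuinely eliminate the cross term so that covariance is affine along the mixture (otherwise the mixture argument collapses), and one must track carefully which partition ends up with the larger first moment for $\mathcal{Y}=1$ versus $\mathcal{Y}=0$. Before committing to the general argument I would run a small explicit sanity check with $\mathcal{Z}\in\{0,1\}$, a strong spurious correlation in $E_i$ and a weaker one in $E_j$, computing $P_j^{i\checkmark}$ and $P_j^{i\times}$ directly from Proposition~\ref{prop:1} to confirm the orientation of the inequalities matches the statement.
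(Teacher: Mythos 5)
Your proposal is correct and takes essentially the same route as the paper's proof: both convert the partial first moments into covariance statements using the uniform marginals (the paper writes $\mathrm{Cov}(\mathcal{Z},\mathcal{Y})$ as half the difference of the two partial moments; you equivalently use $\mathbb{E}[\mathcal{Z}\mathcal{Y}]=\mathrm{Cov}(\mathcal{Z},\mathcal{Y})+\tfrac12\bar z$), both take $\mathrm{Cov}(\mathcal{Z},\mathcal{Y};P_j^{i\times})<0$ from Proposition~\ref{prop:2}, and both deduce $\mathrm{Cov}(\mathcal{Z},\mathcal{Y};P_j^{i\checkmark})>0$ from the mixture decomposition of Proposition~\ref{prop:1} --- a step the paper asserts without comment and you, usefully, make explicit by checking that the uniformity assumptions equalize the component means so that covariance is affine in the mixing weight. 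One point your proposed sanity check would genuinely surface: this derivation (yours and the paper's own alike) concludes $\sum_z z\,P_j^{i\times}(\mathcal{Z}=z,\mathcal{Y}=1)<\sum_z z\,P_j^{i\checkmark}(\mathcal{Z}=z,\mathcal{Y}=1)$, which is the \emph{reverse} orientation of the inequalities as printed in the corollary statement (its directions appear to be swapped); since Theorem~\ref{thm:1} only uses that the two conditional moments diverge, nothing downstream is affected.
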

\begin{proof}
By definition of the covariance, we have
\[
\mathrm{Cov}(\mathcal{Z}, \mathcal{Y}) = 
\sum_{z, y} z y P(\mathcal{Z}=z, \mathcal{Y}=y) - \left(\sum_{z} zP(\mathcal{Z}=z)\right)
\left( \sum_{y}y P(\mathcal{Y}=y)\right)
\]
Since we assume the marginal distribution of the label is uniform, we have $\sum_y y P(\mathcal{Y}=y) = 0.5$. Then we have
\[
\mathrm{Cov}(\mathcal{Z}, \mathcal{Y}) = 
\sum_{z} z P(\mathcal{Z}=z, \mathcal{Y}=1) - 0.5 \sum_{z} zP(\mathcal{Z}=z).
\]
Using $P(\mathcal{Z}=z) = P(\mathcal{Z}=z, \mathcal{Y}=0) + P(\mathcal{Z}=z, \mathcal{Y}=1)$, we obtain
\begin{equation}\label{eq:cov_general}
\mathrm{Cov}(\mathcal{Z}, \mathcal{Y}) = 
0.5\sum_{z} z P(\mathcal{Z}=z, \mathcal{Y}=1) - 0.5 \sum_{z} zP(\mathcal{Z}=z, \mathcal{Y}=0).
\end{equation}
From Proposition~\ref{prop:2}, we have
$\mathrm{Cov}(\mathcal{Z}, \mathcal{Y}; P_j^{i\times}) < 0$. 
Note that this implies $\mathrm{Cov}(\mathcal{Z}, \mathcal{Y}; P_j^{i\checkmark}) > 0$ since $\mathrm{Cov}(\mathcal{Z}, \mathcal{Y}; P_j) > 0$ and $P_j = \alpha_j^i P_j^{i\checkmark} + (1-\alpha_j^i) P_j^{i\times}$.
Combining with Eq~\eqref{eq:cov_general}, we have
\begin{align}\label{eq:ineq}
\sum_{z} z P_j^{i\times}(\mathcal{Z}=z, \mathcal{Y}=1) &< \sum_{z} zP_j^{i\times}(\mathcal{Z}=z, \mathcal{Y}=0),\nonumber\\
\sum_{z} z P_j^{i\checkmark}(\mathcal{Z}=z, \mathcal{Y}=1) &> \sum_{z} zP_j^{i\checkmark}(\mathcal{Z}=z, \mathcal{Y}=0).
\end{align}
Since we assume the marginal distribution of the unstable feature $\mathcal{Z}$ is uniform, we have
\begin{align}\label{eq:sum}
\sum_{z} z P_j^{i\times}(\mathcal{Z}=z, \mathcal{Y}=1) + \sum_{z} zP_j^{i\times}(\mathcal{Z}=z, \mathcal{Y}=0) = \sum_{z} z P_j^{i\times}(\mathcal{Z}=z) = 0.5,\nonumber\\
\sum_{z} z P_j^{i\checkmark}(\mathcal{Z}=z, \mathcal{Y}=1) + \sum_{z} zP_j^{i\checkmark}(\mathcal{Z}=z, \mathcal{Y}=0) = \sum_{z} z P_j^{i\checkmark}(\mathcal{Z}=z) = 0.5.
\end{align}
Plugging Eq~\eqref{eq:sum} into Eq~\eqref{eq:sum}, we have
\begin{align*}
\sum_{z} z P_j^{i\times}(\mathcal{Z}=z, \mathcal{Y}=1) &< 0.25 < \sum_{z} zP_j^{i\times}(\mathcal{Z}=z, \mathcal{Y}=0),\nonumber\\
\sum_{z} z P_j^{i\checkmark}(\mathcal{Z}=z, \mathcal{Y}=1) &> 0.25 > \sum_{z} zP_j^{i\checkmark}(\mathcal{Z}=z, \mathcal{Y}=0).
\end{align*}
Combining the two inequalities finishes the proof.
\end{proof}

Corollary~\ref{cor:1} shows that if we look at examples within the same label value, then expectation of the unstable feature $\mathcal{Z}$ within the set of correct predictions will diverge from the one within the set of incorrect predictions.
In order to learn a metric space that corresponds to the values of $\mathcal{Z}$, we sample different batches from the partitions and prove the following theorem.

\addtocounter{thm}{-1}
\begin{thm}(Full version)
Suppose $\mathcal{Z}$ is independent of $\mathcal{C}$ given $\mathcal{Y}$.
We assume that $\mathcal{Y}$ and $\mathcal{Z}$ both follow a uniform distribution within each partition.
  
Consider examples in $E_j$ with label value $y$. Let $X_1^\checkmark, X_2^\checkmark$ denote two batches of examples that $f_i$ predicted correctly, and let $X_3^\times$ denote a batch of incorrect predictions. If $\mathrm{Cov}(\mathcal{Z}, \mathcal{Y}; P_i) > \mathrm{Cov}(\mathcal{Z}, \mathcal{Y}; P_j) > 0$, we have
\[
\| \overline{\mathcal{\mathcal{Z}}}(X_1^\checkmark) - \overline{\mathcal{\mathcal{Z}}}(X_2^\checkmark) \|_2
< \| \overline{\mathcal{\mathcal{Z}}}(X_1^\checkmark) - \overline{\mathcal{\mathcal{Z}}}(X_3^\times) \|_2
\]
almost surely for large enough batch size.
\end{thm}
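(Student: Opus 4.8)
The plan is to reduce the stochastic inequality to a deterministic separation between two population means, and then let the strong law of large numbers finish the argument. The crucial observation is that $X_1^\checkmark$ and $X_2^\checkmark$ are drawn from the \emph{same} conditional distribution $P_j^{i\checkmark}(\,\cdot\mid \mathcal{Y}=y)$, whereas $X_3^\times$ is drawn from a genuinely \emph{different} one, $P_j^{i\times}(\,\cdot\mid \mathcal{Y}=y)$. Consequently the two sides of the claim behave very differently as the batch size grows: the left-hand side compares two empirical means of one and the same distribution and should collapse to zero, while the right-hand side compares empirical means of two distinct distributions and should stay bounded away from zero.

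Concretely, I would set $\mu^\checkmark := \mathbb{E}_{P_j^{i\checkmark}}[\mathcal{Z}\mid \mathcal{Y}=y]$ and $\mu^\times := \mathbb{E}_{P_j^{i\times}}[\mathcal{Z}\mid \mathcal{Y}=y]$. Since the batches are sampled uniformly and independently from finitely supported partitions (hence with finite variance), the strong law of large numbers gives, almost surely,
\[
\overline{\mathcal{Z}}(X_1^\checkmark) \to \mu^\checkmark, \qquad
\overline{\mathcal{Z}}(X_2^\checkmark) \to \mu^\checkmark, \qquad
\overline{\mathcal{Z}}(X_3^\times) \to \mu^\times
\]
as the batch size tends to infinity. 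On the probability-one event where all three limits hold, continuity of the norm yields $\|\overline{\mathcal{Z}}(X_1^\checkmark)-\overline{\mathcal{Z}}(X_2^\checkmark)\|_2 \to 0$ and $\|\overline{\mathcal{Z}}(X_1^\checkmark)-\overline{\mathcal{Z}}(X_3^\times)\|_2 \to \|\mu^\checkmark-\mu^\times\|_2$.

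The remaining ingredient is the strict separation $\mu^\checkmark \neq \mu^\times$, which I would obtain directly from Corollary~\ref{cor:1}. Under the stated assumptions it shows, for $\mathcal{Y}=1$, that the correct-partition joint mass $\sum_z z\,P_j^{i\checkmark}(\mathcal{Z}=z,\mathcal{Y}=1)$ exceeds $0.25$ while the incorrect-partition mass lies below it, and symmetrically for $\mathcal{Y}=0$; dividing by the uniform label marginal $P(\mathcal{Y}=y)=0.5$ converts this into a strict gap between the conditional means, so $\|\mu^\checkmark-\mu^\times\|_2 > 0$. Thus the right-hand limit is strictly positive while the left-hand limit is zero, and on the almost-sure convergence event there is a (sample-dependent) threshold beyond which $\|\overline{\mathcal{Z}}(X_1^\checkmark)-\overline{\mathcal{Z}}(X_2^\checkmark)\|_2 < \|\overline{\mathcal{Z}}(X_1^\checkmark)-\overline{\mathcal{Z}}(X_3^\times)\|_2$, which is exactly the assertion.

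I expect the main obstacle to be interpreting and justifying the phrase ``almost surely for large enough batch size'' rather than the weaker ``with probability tending to one''. The clean formulation is that the separating event holds eventually along the batch size on a single probability-one set, and this is precisely what the \emph{strong} law supplies, so I would lean on almost-sure convergence throughout instead of convergence in probability. Beyond this limiting bookkeeping, all the real content is the strict separation $\mu^\checkmark \neq \mu^\times$ furnished by Corollary~\ref{cor:1}; the rest is continuity of the norm and a direct comparison of the two limits.
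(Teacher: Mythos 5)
Your proof is correct and takes essentially the same route as the paper's: the (strong) law of large numbers sends the batch means to the partition-conditional expectations $\mathbb{E}_{P_j^{i\checkmark}}[\mathcal{Z}\mid\mathcal{Y}=y]$ and $\mathbb{E}_{P_j^{i\times}}[\mathcal{Z}\mid\mathcal{Y}=y]$, and Corollary~\ref{cor:1} supplies the strict separation between them. You are in fact a bit more explicit than the paper, which leaves implicit both the conversion of the corollary's joint-mass inequalities into conditional means (dividing by the uniform label marginal) and the bookkeeping that the left-hand side tends to zero while the right-hand side tends to a strictly positive limit on a single probability-one event.
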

\begin{proof}
Without loss of generality, we consider $y=0$.
Let $n$ denote the batch size of $X_1^\checkmark$, $X_2^\checkmark$ and $X_3^\checkmark$.
By the law of large numbers, we have
\[
\overline{\mathcal{Z}}(X_1^\checkmark), \overline{\mathcal{Z}}(X_2^\checkmark) \xrightarrow{\text{a.s.}}  \mathbb{E}_{P_j^{i\checkmark}(\mathcal{Z} \mid \mathcal{Y})} \left[\mathcal{Z} \mid \mathcal{Y}=0\right]\quad\text{and}\quad
\overline{\mathcal{Z}}(X_3^\times) \xrightarrow{\text{a.s.}} \mathbb{E}_{P_j^{i\times}(\mathcal{Z} \mid \mathcal{Y})} \left[\mathcal{Z} \mid \mathcal{Y}=0\right],
\]
as $n \rightarrow \infty$. Note that Corollary~\ref{cor:1} tells us
\[
\mathbb{E}_{P_j^{i\times}(\mathcal{Z} \mid \mathcal{Y})} \left[\mathcal{Z} \mid \mathcal{Y}=0\right] < \mathbb{E}_{P_j^{i\checkmark}(\mathcal{Z} \mid \mathcal{Y})} \left[\mathcal{Z} \mid \mathcal{Y}=0\right].
\]
Thus we have 
\[
\| \overline{\mathcal{\mathcal{Z}}}(X_1^\checkmark) - \overline{\mathcal{\mathcal{Z}}}(X_2^\checkmark) \|_2
< \| \overline{\mathcal{\mathcal{Z}}}(X_1^\checkmark) - \overline{\mathcal{\mathcal{Z}}}(X_3^\times) \|_2
\]
almost surely as $n\rightarrow \infty$.
\end{proof}
We note that while we focus our theoretical analysis on binary tasks, empirically, our method is able to correctly identify the hidden bias for multi-dimensional unstable features and multi-dimensional label values.
\section{Experimental setup}
\subsection{Datasets and models}
\label{app:dataset}

\subsubsection{MNIST}
\paragraph{Data}
We extend \citet{arjovsky2019invariant}'s approach for generating spurious correlations and define two \emph{multi-class} classification tasks: \textsc{even} (5-way classification among digits \texttt{0,2,4,6,8}) and \textsc{odd} (5-way classification among digits \texttt{1,3,5,7,9}).
For each image, we first map its numeric digit value $y^{\text{digit}}$ into its class id within the task: $y^\text{causal} = \lfloor y^{\text{digit}} / 2\rfloor$. This class id serves as the causal feature for the given task.
We then sample the observed label $y$, which equals to $y^\text{causal}$ with probability 0.75 and a uniformly random other label value with the remaining probability.
With this noisy label, we now sample the spurious color feature:
the color value equals $y$ with $\eta$ probability and a uniformly other value with the remaining probability.
We note that since there are five different digits for each task, we have five different colors.
Finally, we color the image according to the generated color value.
For the training environments, we set $\eta$ to $0.8$ in $E_1^{\text{train}}$ and $0.9$ in $E_2^{\text{train}}$.
We set $\eta=0.1$ in the testing environment $E^{\text{test}}$.

We use the official train-test split of MNIST. Training environments are constructed from training split, with 7370 examples per environment for \textsc{even} and 7625 examples per environment for \textsc{odd}. Validation data and testing data is constructed based on the testing split. For \textsc{even}, both validation data and testing data have 1230 examples. For \textsc{odd}, the number is 1267. Following \citet{arjovsky2019invariant}, We convert each grey scale image into a $5\times 28 \times 28$ tensor, where the first dimension corresponds to the spurious color feature.

\paragraph{Representation backbone}
We follow the architecture from PyTorch's MNIST example\footnote{https://github.com/pytorch/examples/blob/master/mnist/main.py}.
Specifically, each input image is passed to a CNN with 2 convolution layers followed by 2 fully connected layers.

\paragraph{License}
The dataset is freely available at ~\url{http://yann.lecun.com/exdb/mnist/}.

\subsubsection{Beer Review}
\paragraph{Data}
We consider the transfer among three \emph{binary} aspect-level sentiment classification tasks: \textsc{look}, \textsc{aroma} and \textsc{palate}~\citep{lei2016rationalizing}.
For each review, we follow \citet{yujia2021predict} and append a pseudo token (\texttt{art\_pos} or \texttt{art\_neg}) based on the the sentiment of the given aspect (\texttt{pos} or \texttt{neg}). 
The probability that this pseudo token agrees with the sentiment label is $0.8$ in $E_1^{\text{train}}$ and $0.9$ in $E_2^{\text{train}}$. In the testing environment, this probability reduces to $0.1$.
Unlike MNIST, there is no label noise added to the data.

We use the script created by \citet{yujia2021predict} to generate spurious features for each aspect.
Specifically, for each aspect, we randomly sample training/validation/testing data from the dataset.
Since our focus in this paper is to measure whether the algorithm is able to remove biases (rather than label imbalance), we maintain the marginal distribution of the label to be uniform.
Each training environment contains 4998 examples.
The validation data contains 4998 examples and the testing data contains 5000 examples.
The vocabulary sizes for the three aspects (look, aroma, palate) are: 10218, 10154 and 10086.

\paragraph{Representation backbone}
We use a 1D CNN~\cite{kim-2014-convolutional}, with filter size $3, 4, 5$, to obtain the feature representation.
Specifically, each input is first encoded by pre-trained FastText embeddings~\cite{mikolov2018advances}.
Then it is passed into a convolution layer followed by max pooling and ReLU activation.

\paragraph{License}
This dataset was originally downloaded from \url{https://snap.stanford.edu/data/web-BeerAdvocate.html}.
As per request from BeerAdvocate the data is no longer publicly available.

\subsubsection{ASK2ME}
\paragraph{Data}
ASK2ME~\citep{doi:10.1200/CCI.19.00042} is a text classification dataset where the inputs are paper abstracts from PubMed. We study the transfer between two \emph{binary} classification tasks: \textsc{penetrance} (identifying whether the abstract is informative about the risk of cancer for gene mutation carriers) and \textsc{incidence} (identifying whether the abstract is informative about proportion of gene mutation carriers in the general population).
By definition, both tasks are causally-independent of the diseases that have been studied in the abstract. However, due to the bias in the data collection process, \citet{deng2019validation} found that the performance varies (by 12\%) when we evaluate based on different cancers. To assess whether we can remove such bias, we define two training environments for each task based on the correlations between the task label and the \texttt{breast\_cancer} attribute (indicating the presence of breast cancer in the abstract). Script for generating the environments is available in the supplemental materials. Note that the model doesn't have access to the \texttt{breast\_cancer} attribute during training.

Following~\citet{sagawa2019distributionally}, we evaluate the performance on a balanced test environment where there is no spurious correlation between \texttt{breast\_cancer} and the task label. This helps us understand the overall generalization performance across different input distributions. 

We randomly split the data and use 50\% for \textsc{penetrance} and 50\% for \textsc{incidence}.
For \textsc{penetrance}, there are 948 examples in $E_1^{\text{train}}$ and $E^{\text{val}}$, 816 examples in $E_2^{\text{train}}$ and 268 examples in $E^{\text{test}}$.
For \textsc{incidence}, there are 879 examples in $E_1^{\text{train}}$ and $E^{\text{val}}$, 773 examples in $E_2^{\text{train}}$ and 548 examples in $E^{\text{test}}$.
The processed data will be publicly available.

\paragraph{Representationi backbone}
The model architecture is the same as the one for Beer review.

\paragraph{License}
MIT License.

\subsubsection{Waterbird}
\paragraph{Data}
Waterbird is an image classification dataset where each image is labeled based on its bird class~\citep{WelinderEtal2010} and the background attribute (\texttt{water} vs.\ \texttt{land}).
Following~\citet{sagawa2019distributionally}, we group different bird classes together and consider two \emph{binary} classification tasks: \textsc{seabird} (classifying 36 seabirds against 36 landbirds) and \textsc{waterfowl} (classifying 9 waterfowl against 9 \emph{different} landbirds).
Similar to ASK2ME, we define two training environments for each task based on the correlations between the task label and the \texttt{background} attribute. Script for generating the environments is available in the supplemental materials. At test time, we measure the generalization performance on a balanced test environment. 

Following \citet{liu2015faceattributes}, we group different classes of birds together to form  binary classification tasks. 

In \textsc{waterfowl}, the task is to identify 9 different waterfowls (Red breasted Merganser, Pigeon Guillemot, Horned Grebe, Eared Grebe, Mallard, Western Grebe, Gadwall, Hooded Merganser, Pied billed Grebe) against 9 different landbirds (Mourning Warbler, Whip poor Will, Brewer Blackbird, Tennessee Warbler, Winter Wren, Loggerhead Shrike, Blue winged Warbler, White crowned Sparrow, Yellow bellied Flycatche).
The training environment $E_1^{\text{train}}$ contains 298 examples and the training environment $E_2^{\text{train}}$ contains 250 examples. The validation set has $300$ examples and the test set has 216 examples. 

In \textsc{seabird}, the task is to identify \emph{36 different seabirds} (Heermann Gull, Red legged Kittiwake, Rhinoceros Auklet, White Pelican, Parakeet Auklet, Western Gull, Slaty backed Gull, Frigatebird, Western Meadowlark, Long tailed Jaeger, Red faced Cormorant, Pelagic Cormorant, Brandt Cormorant, Black footed Albatross, Western Wood Pewee, Forsters Tern, Glaucous winged Gull, Pomarine Jaeger, Sooty Albatross, Artic Tern, California Gull, Horned Puffin, Crested Auklet, Elegant Tern, Common Tern, Least Auklet, Northern Fulmar, Ring billed Gull, Ivory Gull, Laysan Albatross, Least Tern, Black Tern, Caspian Tern, Brown Pelican, Herring Gull, Eastern Towhee) against \emph{36 different landbirds} (Prairie Warbler, Ringed Kingfisher, Warbling Vireo, American Goldfinch, Black and white Warbler, Marsh Wren, Acadian Flycatcher, Philadelphia Vireo, Henslow Sparrow, Scissor tailed Flycatcher, Evening Grosbeak, Green Violetear, Indigo Bunting, Gray Catbird, House Sparrow, Black capped Vireo, Yellow Warbler, Common Raven, Pine Warbler, Vesper Sparrow, Pileated Woodpecker, Bohemian Waxwing, Bronzed Cowbird, American Three toed Woodpecker, Northern Waterthrush, White breasted Kingfisher, Olive sided Flycatcher, Song Sparrow, Le Conte Sparrow, Geococcyx, Blue Grosbeak, Red cockaded Woodpecker, Green tailed Towhee, Sayornis, Field Sparrow, Worm eating Warbler).
The training environment $E_1^{\text{train}}$ contains 1176 examples and the training environment $E_2^{\text{train}}$ contains 998 examples. The validation set has 1179 examples and the test set has 844 examples.

\paragraph{Representation backbone}
We use the Pytorch torchvision implementation of the ResNet50 model, starting from pretrained weights.
We re-initalize the final layer to predict the label.

\paragraph{License}
This dataset is publicly available at \url{https://nlp.stanford.edu/data/dro/waterbird_complete95_forest2water2.tar.gz}

\subsubsection{CelebA}
\paragraph{Data}
CelebA~\citep{liu2015deep} is an image classification dataset where each image is annotated with 40 binary attributes. We consider \texttt{Eyeglasses} as the source task and \texttt{BlondHair} as the target task. We split the official train / val / test set into two parts (uniformly at random) for each task. We use the attribute \texttt{Young} to create two environments: $E_1=\{\texttt{Young}=0\}$, $E_2=\{\texttt{Young}=1\}$. For the target task, the model only has access to $E_1$ during training and validation. Table~\ref{tab:celeba_stat} summarizes the data statistics.

\begin{table}[t]
\centering
\caption{Data statistics of CelebA. The model has access to both $E_1$ and $E_2$ on the source task. For the target task, only $E_1$ is available during training and validation.}
\label{tab:celeba_stat}
\begin{tabular}{lcccc}
\toprule
      & \multicolumn{2}{c}{source task \texttt{Eyeglasses}} & \multicolumn{2}{c}{target task \texttt{BlondHair}} \\
      \cmidrule{2-3}\cmidrule{4-5}
      & $E_1:\{\texttt{Young=0}\}$ & $E_2:\{\texttt{Young=1}\}$ & $E_1:\{\texttt{Young=0}\}$ & $E_2:\{\texttt{Young=1}\}$ \\\midrule
Train & 17955 & 63430 & 17973 & 63412    \\\midrule
Val   & 2494  & 7442  & 2453   & 7480    \\\midrule
Test  & 2452  & 7597  & 2444   & 7537    \\\bottomrule
\end{tabular}
\end{table}

\paragraph{License}
The CelebA dataset is available for non-commercial research purposes only. It is publicly available at \url{https://mmlab.ie.cuhk.edu.hk/projects/CelebA.html}.

\paragraph{Representation backbone}
We use the Pytorch torchvision implementation of the ResNet50 model, starting from pretrained weights.
We re-initalize the final layer to predict the label.

\subsection{Implementation details}
\label{app:details}
\paragraph{For all methods:}
We use batch size $50$ and evaluate the validation performance every $100$ batch.
We apply early stopping once the validation performance hasn't improved in the past 20 evaluations.
We use Adam~\cite{kingma2014adam} to optimize the parameters and tune the learning rate $\in\{10^{-3}, 10^{-4}\}$.
For simplicity, we train all methods without data augmentation.
Following \citet{sagawa2019distributionally}, we apply strong regularizations to avoid over-fitting.
Specifically, we tune the dropout rate $\in\{0.1, 0.3, 0.5\}$ for text classification datasets (Beer review and ASK2ME) and tune the weight decay parameters $\in\{10^{-1}, 10^{-2}, 10^{-3}\}$ for image datasets (MNIST, Waterbird and CelebA).

\textbf{\textsc{dann}, \textsc{c-dann}}
For the domain adversarial network, we use a MLP with 2 hidden ReLU layer with 300 neurons for each layer. The representation backbone is updated via a gradient reversal layer. We tune the weight of the adversarial loss $\in\{0.01, 0.1, 1\}$. 

\textbf{\textsc{mmd}} We match the mean and covariance of the features across the two source environments. We use the implementation from \url{https://github.com/facebookresearch/DomainBed/blob/main/domainbed/algorithms.py}. We tune the weight of the MMD loss $\in\{0.01, 0.1, 1\}$.

\textbf{\textsc{multitask}} For the source task, we first partition the source data into subsets with opposite spurious correlations~\citep{yujia2021predict}. During multi-task training, we minimize the worst-case risk over all these subsets for the source task and minimize the average empirical risk for the target task. \textsc{multitask} is more flexible than \textsc{reuse} since we tune feature extractor to fit the target data. Compared to \textsc{finetune}, \textsc{multitask} is more constrained as the source model prevents over-utilization of unstable features during joint training.

\textbf{Ours}
We fix $\delta=0.3$ in all our experiments.
Based on our preliminary experiments (Figure~\ref{fig:cluster}), we fix the number of clusters to be $2$ for all our experiments in Table~\ref{tab:big} and Table~\ref{tab:celeba}.
For the target classifier, we directly optimize the $\min-\max$ objective. Specifically, at each step, we sample a batch of example from each group, and minimize the worst-group loss. We found the training process to be pretty stable when using the Adam optimizer.

\textbf{Validation criteria}
For \textsc{erm}, \textsc{reuse}, \textsc{finetune} and \textsc{multitask}, since we don't have any additional information (such as environments) for the target data, we apply early stopping and hyper-parameter selection based on the average accuracy on the validation data.

For \textsc{tofu}, since we have already learned an unstable feature representation $f_{\mathcal{Z}}$ on the source task, we can also use it to cluster the validation data into groups where the unstable features within each group are different. We measure the worst-group accuracy and use it as our validation criteria.

For \textsc{oracle}, as we assume access to the oracle unstable features for the target data, we can use them to define groups on the validation data as well. We use the worst-group accuracy as our validation criteria.

We also note that when we transfer from \textsc{look} to \textsc{aroma} in Table~\ref{tab:big}, both \textsc{tofu} and \textsc{oracle} are able to achieve $~75$ accuracy on $E^{\text{test}}$. 
This number is higher than the performance of training on \textsc{aroma} with two data environments (~68 accuracy in Table~\ref{tab:big}).
This result makes sense since in the latter case, we only have in-domain validation set and we use the average accuracy as our hyper-parameter selection metric. However, in both \textsc{tofu} and \textsc{oracle}, we create (either automatically or manually) groups over the validation data and measure the worst-group performance. This ensures that the chosen model will not over-fit to the unstable correlations.

\paragraph{Computational resources:}
We use our internal clusters (24 NVIDIA RTX A6000 and 16 Tesla V100-PCIE-32GB) for the experiments. It took around a week to generate all the results in Table~\ref{tab:big} and Table~\ref{tab:celeba}.
\section{Additional analysis}
\textbf{\emph{Why do the baselines behave so differently across different datasets?}}
As \citet{bao2019few} pointed out, the transferability of the low-level features is very different in image classification and in text classification. For example, the keywords for identifying the sentiment of \textsc{look} are very different from the ones for \textsc{palate}. Thus, fine-tuning the feature extractor is crucial. This explains why \textsc{reuse} underperforms other baselines on text data.
Conversely, in image classification, the low-level patterns (such as edges) are more transferable across tasks. Directly reusing the feature extractor helps improve model stability against spurious correlations.
Finally, we note that since \textsc{tofu} transfers the unstable features instead of the task-specific causal features, it performs robustly across all the settings.

\begin{figure}[t]
  \begin{center}
    \includegraphics[width=0.4\linewidth]{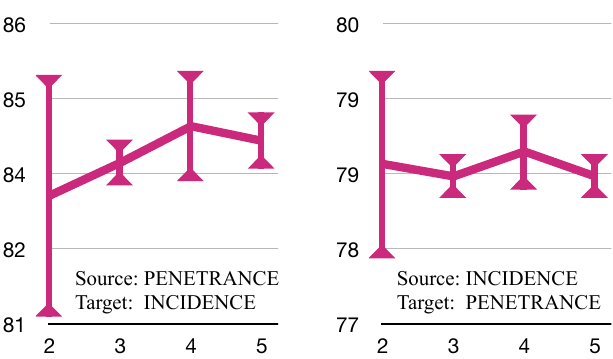}
  \end{center}
  \caption{Accuracy of \textsc{tofu} on \textsc{ask2me} as we vary the number of clusters $n_c$ generated for each label value. Empirically, we see that while having more clusters doesn't improve the performance, it helps reduce the variance.}
  \label{fig:cluster}
\end{figure}

\textbf{\emph{How many clusters to generate?}}
We study the effect of the number of clusters on \textsc{ask2me}.
Figure~\ref{fig:cluster} shows that while generating more clusters in the unstable feature space $f_\mathcal{Z}$ reduces the variance, it doesn't improve the performance by much.
This is not very surprising as the training data is primarily biased by a single \texttt{breast\_cancer} attribute.
We expect that having more clusters will be beneficial for tasks with more sophisticated underlying biases.

\begin{figure}[t]
  \begin{center}
    \includegraphics[width=0.6\linewidth]{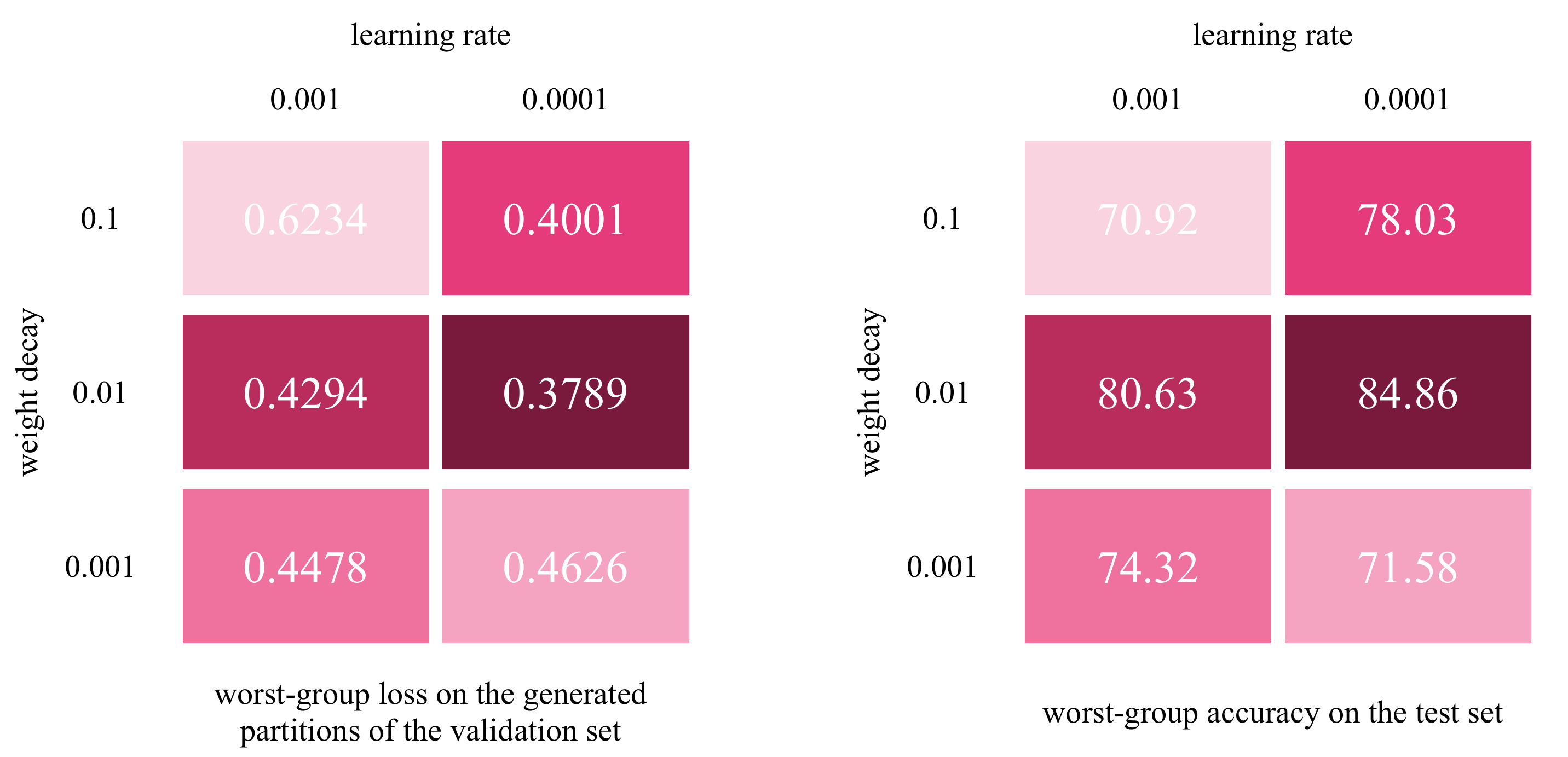}
  \end{center}
  \caption{Hyper-parameter selection for \textsc{tofu} on CelebA (averaged across 5 runs). We use our learned unstable feature representation $f_\mathcal{Z}$ to partition the validation set and use the worst-group validation loss as our hyper-parameter selection criteria. Empirically, we observe that this criteria correlates well with the model robustness on the testing data.}
  \label{fig:val_loss_test_acc}
\end{figure}

\textbf{\emph{How do we select the hyper-parameter for \textsc{tofu}?}} We cluster the validation data based on the learned unstable feature representation $f_\mathcal{Z}$ and use the worst-group loss as our early stopping and hyper-parameter selection criteria. Figure~\ref{fig:val_loss_test_acc} shows our hyper-parameter search space. We observe that our validation criteria correlates well with the robustness of the model on the testing data.

\begin{table}[t]
  \centering
  \caption{Ablation study on \textsc{mnist} and \textsc{beer review}.
  }\label{tab:ablation}
  \small
  \begin{tabular}{ccccccc}
  \toprule
  & \textsc{source} &\textsc{target} & \hspace{-1mm}$\textsc{finetune}_{\textsc{pi}}$\hspace{-1mm} & \hspace{-1mm}\textsc{ablation}\hspace{-1mm} & \textsc{tofu} \\
  \midrule
\multirow{2}{*}{\STAB{\rotatebox[origin=c]{90}{\textsc{mnist}}}}\hspace{-1mm}
& \textsc{odd}     & \textsc{even}   & 11.2 & 19.2 & \textbf{69.1} \\ \cmidrule{2-6}
& \textsc{even}    & \textsc{odd}    & 11.5 & 18.7 & \textbf{66.8} \\ \midrule
\multirow{6}{*}{\STAB{\rotatebox[origin=c]{90}{\hspace{-9mm}\textsc{beer review}}}}\hspace{-1mm}
& \textsc{look}    & \textsc{aroma}  & 53.7 & 71.4 & \textbf{75.9} \\ \cmidrule{2-6}
& \textsc{look}    & \textsc{palate} & 49.3 & 64.6 & \textbf{73.8} \\ \cmidrule{2-6} 
& \textsc{aroma}   & \textsc{look}   & 65.2 & 69.6 & \textbf{80.9} \\ \cmidrule{2-6} 
& \textsc{aroma}   & \textsc{palate} & 47.9 & 50.3 & \textbf{73.5} \\ \cmidrule{2-6} 
& \textsc{palate}  & \textsc{look}   & 64.3 & 70.1 & \textbf{81.0}  \\ \cmidrule{2-6} 
& \textsc{palate}  & \textsc{aroma}  & 54.5 & 57.9 & \textbf{76.9}
\\\bottomrule
  \end{tabular}
\end{table}

\textbf{\emph{Ablation study}}
The procedures in \textsc{tofu} are interconnected. For example, we cannot apply group DRO (\textbf{T.2}) without clustering the target examples (\textbf{T.1}). Nevertheless, we can empirically verify that \textsc{tofu} provides cleaner separation in the unstable feature space than a naive variant. 

In Theorem 1, we show that it is necessary to compare example pairs with the same label value, as opposed to contrasting all example pairs. Otherwise, the unstable features will be coupled with the \emph{task-specific} information. 
To empirically support our design choice, we consider a variant of \textsc{tofu} that minimizes the hinge loss over all example pairs in \textbf{S.3}:

\begin{description}
    \item[\textbf{S.3} (\textsc{ablation})]
    Learn a feature representation $f_{\mathcal{Z}}$ by minimizing Eq (1) across all pairs of environments $E_i, E_j$:
    \[
    f_\mathcal{Z} = \arg\min
    \sum_{E_i\neq E_j} 
    \mathbb{E}_{X_1^\checkmark, X_2^\checkmark, X_3^\times}
    \left[\mathcal{L}_\mathcal{Z}(X_1^\checkmark, X_2^\checkmark, X_3^\times)\right],
    \]
    where batches $X_1^\checkmark, X_2^\checkmark$ are sampled uniformly from $E_j^{i\checkmark}$ and batch $X_3^\times$ is sampled uniformly from $E_j^{i\times}$.
\end{description}

We can think of \textsc{ablation} as directly transferring the partitions from the source task to the target task. Table~\ref{tab:ablation} presents the results on \textsc{mnist} and \textsc{beer review}. We observe that while \textsc{ablation} slightly improves over the fine-tuning baseline, it significantly underperforms \textsc{tofu} across all transfer settings. Figure~\ref{fig:ablation} further confirms that the feature space learned by \textsc{ablation} are not representative of the unstable color feature. We will include this ablation analysis in our updated version.


\begin{figure}[t]
    \centering
    \includegraphics[width=0.6\linewidth]{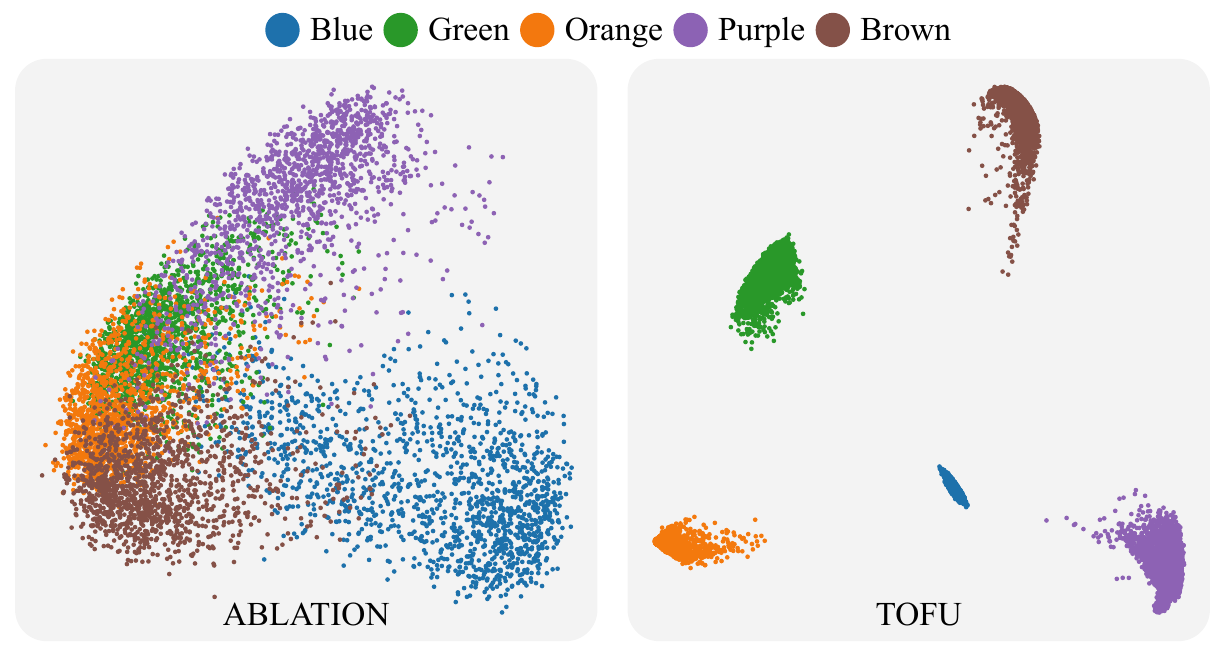}
    \caption{PCA visualizations of the feature representation $f_\mathcal{Z}$ for examples in \textsc{mnist} \textsc{even} (best viewed in color). $f_\mathcal{Z}$ is trained on \textsc{mnist} \textsc{odd}. Compared with \textsc{ablation} (left), \textsc{tofu} perfectly separates target examples based on their spurious features (color).}
    \label{fig:ablation}
\end{figure}
\section{Multiple source tasks}
\label{app:multi_source}
One major limitation of our work is that the source task and the target task need to share the same unstable features. While a single source task may not describe all unwanted unstable features, we can leverage multiple source tasks and combine their individual unstable features together.

\begin{table}[t]
  \centering
  \caption{Illustration of the tasks on \textsc{mnist} for multiple source tasks experiments. In the source tasks ($S_1, S_2, S_3$), we want to classify two digits where the label is spuriously correlated with a color pair (red-blue, red-green, blue-green). In the target task $T$, the goal is to learn a color-invariant model by using only one \emph{biased} environment $E_1^{\text{train}}$.
  }\label{tab:multi_source_data}
  \begin{tabular}{ccccc}
  \toprule
  Tasks & Labels & $E_1^{\text{train}}$  & $E_2^{\text{train}}$ & $E^{\text{test}}$  \\
  \midrule
  $S_1$ & \texttt{0} vs. \texttt{1} &
  \begin{tabular}{@{}c@{}}
    {\color{red}\texttt{000000000}}{\color{blue}\texttt{0}}\\
    {\color{blue}\texttt{111111111}}{\color{red}\texttt{1}}
  \end{tabular} &
  \begin{tabular}{@{}c@{}}
    {\color{red}\texttt{00000000}}{\color{blue}\texttt{00}}\\
    {\color{blue}\texttt{11111111}}{\color{red}\texttt{11}}
  \end{tabular} &
  \begin{tabular}{@{}c@{}}
    {\color{red}\texttt{0}}{\color{blue}\texttt{000000000}}\\
    {\color{blue}\texttt{1}}{\color{red}\texttt{111111111}}
  \end{tabular}
  \\ \midrule
  $S_2$ & \texttt{2} vs. \texttt{3} &
  \begin{tabular}{@{}c@{}}
    {\color{red}\texttt{222222222}}{\color{green}\texttt{2}}\\
    {\color{green}\texttt{333333333}}{\color{red}\texttt{3}}
  \end{tabular} &
  \begin{tabular}{@{}c@{}}
    {\color{red}\texttt{22222222}}{\color{green}\texttt{22}}\\
    {\color{green}\texttt{33333333}}{\color{red}\texttt{33}}
  \end{tabular} &
  \begin{tabular}{@{}c@{}}
    {\color{red}\texttt{2}}{\color{green}\texttt{222222222}}\\
    {\color{green}\texttt{3}}{\color{red}\texttt{333333333}}
  \end{tabular}
  \\ \midrule
    $S_3$ & \texttt{4} vs. \texttt{5} &
  \begin{tabular}{@{}c@{}}
    {\color{blue}\texttt{444444444}}{\color{green}\texttt{4}}\\
    {\color{green}\texttt{555555555}}{\color{blue}\texttt{5}}
  \end{tabular} &
  \begin{tabular}{@{}c@{}}
    {\color{blue}\texttt{444444444}}{\color{green}\texttt{44}}\\
    {\color{green}\texttt{55555555}}{\color{blue}\texttt{55}}
  \end{tabular} &
  \begin{tabular}{@{}c@{}}
    {\color{blue}\texttt{4}}{\color{green}\texttt{4444444444}}\\
    {\color{green}\texttt{5}}{\color{blue}\texttt{555555555}}
  \end{tabular} \\ \midrule
  $T$   & \texttt{6} vs. \texttt{7} vs. \texttt{8} &
  \begin{tabular}{@{}c@{}}
    {\color{red}\texttt{66666666}}{\color{blue}\texttt{6}}{\color{green}\texttt{6}}\\
    {\color{blue}\texttt{77777777}}{\color{green}\texttt{7}}{\color{red}\texttt{7}}\\
    {\color{green}\texttt{88888888}}{\color{red}\texttt{8}}{\color{blue}\texttt{8}}
  \end{tabular} & NA &
  \begin{tabular}{@{}c@{}}
    {\color{red}\texttt{66}}{\color{blue}\texttt{6666}}{\color{green}\texttt{6666}}\\
    {\color{blue}\texttt{77}}{\color{green}\texttt{7777}}{\color{red}\texttt{7777}}\\
    {\color{green}\texttt{88}}{\color{red}\texttt{8888}}{\color{blue}\texttt{8888}}
  \end{tabular}
  \\ \bottomrule
  \end{tabular}
\end{table}

\paragraph{Extending \textsc{tofu} to multiple source tasks}
We can naturally extend our algorithm by inferring a \emph{joint} unstable feature space across all source tasks.
\begin{description}
\item[\textbf{MS.1}] For each source task $S$ and for each source environment ${^S}E_i$, train an environment-specific classifier ${^S}f_i$.
\item[\textbf{MS.2}] For each source task $S$ and for each pair of environments ${^S}E_i$ and ${^S}E_j$, use classifier ${^S}f_i$ to partition ${^S}E_j$ into two sets: ${^S}E_{j}^{i \checkmark}$ and ${^S}E_j^{i \times}$, where ${^S}E_{j}^{i \checkmark}$ contains examples that ${^S}f_i$ predicted correctly and ${^S}E_j^{i \times}$ contains those predicted incorrectly.
\item[\textbf{MS.3}] Learn an unstable feature representation $f_{\mathcal{Z}}$ by minimizing Eq~\eqref{eq:hinge} across all source tasks $S$, all pairs of environments ${^S}E_i, {^S}E_j$ and all possible label value $y$: 
  \[f_\mathcal{Z} = \arg\min
  \sum_S \sum_{y, {^S}E_i\neq {^S}E_j} 
  \mathbb{E}_{X_1^\checkmark, X_2^\checkmark, X_3^\times}
  \left[\mathcal{L}_\mathcal{Z}(X_1^\checkmark, X_2^\checkmark, X_3^\times)\right],
  \]
  where batches $X_1^\checkmark, X_2^\checkmark$ are sampled uniformly from ${^S}E_j^{i\checkmark}|_y$ and batch $X_3^\times$ is sampled uniformly from ${^S}E_j^{i\times}|_y$ ($\cdot|_y$ denotes the subset of $\cdot$ with label value $y$).
\end{description}

On the target task, we use this joint unstable feature representation $f_{\mathcal{Z}}$ to generate clusters as in Section~\ref{sec:target}. Since $f_{\mathcal{Z}}$ is trained across the source tasks, the generated clusters are informative of all unstable features that are present in these tasks. By minimizing the worst-case risks across the clusters, we obtain the final stable classifier.

\paragraph{Experiment setup}
We design controlled experiments on MNIST to study the effect of having multiple source tasks.
We consider three source tasks: $S_1$ (\texttt{0} vs. \texttt{1}), $S_2$ (\texttt{2} vs \texttt{3}) and $S_3$ (\texttt{4} vs. \texttt{5}). For the target task $T$, the goal is to identify \texttt{6}, \texttt{7} and \texttt{8}.

Similar to Section~\ref{sec:setup}, we first generated the observed noisy label based on the digits. We then inject spurious color features to the input images. For $S_1$, $S_2$ and $S_3$, the noisy labels are correlated with \texttt{red}/\texttt{blue}, \texttt{red}/\texttt{green} and \texttt{blue}/\texttt{green} respectively. For the target task $T$, the three noisy labels (\texttt{6}/\texttt{7}/\texttt{8}) are correlated with all three colors \texttt{red}/\texttt{blue}/\texttt{green}. Table~\ref{tab:multi_source_data} illustrate the different spurious correlations across the tasks.

\paragraph{Baselines}
Since \textsc{erm} and \textsc{oracle} only depend on the target task, they are the same as we described in  Section~\ref{sec:setup}. For \textsc{reuse} and \textsc{finetune}, we first use multitask learning to learn a shared feature representation across all tasks. Specifically, for each source task, we first partition its data into subsets with opposite spurious correlations by contrasting its data environments $E_1^{\text{train}}$ and $E_2^{\text{train}}$~\cite{yujia2021predict}. We then train a joint model, with a different classifier head for each source task, by minimizing the worst-case risk over all these subsets for each source task. The shared feature representation is directly transferred to the target task. The baseline \textsc{multitask} is similar to \textsc{reuse} and \textsc{finetune}. The difference is that we jointly train the target task classifier together with all source tasks' classifiers.

\begin{table}[t]
  \centering
  \caption{Target task testing accuracy of different methods on \textsc{mnist} with different combinations of the source tasks (see Table~\ref{tab:multi_source_data} for an illustration of the tasks). Majority baseline is $33\%$. All methods are tuned based on a held-out validation set that follows from the same distribution as the target training data. Bottom right: standard deviation across 5 runs. Upper right: avg. source task testing performance (if applicable).
  }\label{tab:multi_source}
  \begin{tabular}{ccccccccc}
  \toprule
  \textsc{source} & \textsc{erm} & $\textsc{reuse}_\textsc{pi}$ & $\textsc{finetune}_\textsc{pi}$ & \textsc{multitask} & \textsc{tofu} & \textsc{oracle} \\
  \midrule
  $S_1$             & 26.8{\tiny${\pm2.4}$} & 34.7$_{\tiny\pm5.0}^{\tiny(72.0)}$  & 35.1$_{\tiny\pm2.4}^{\tiny(71.9)}$  &17.7$_{\tiny\pm0.3}^{\tiny(69.4)}$ &\textbf{57.3}{\tiny${\pm6.9}$} &72.7{\tiny${\pm0.7}$}\\ \midrule
  $S_2$             & 26.8{\tiny${\pm2.4}$} & 34.6$_{\tiny\pm1.7}^{\tiny(68.0)}$  & 31.0$_{\tiny\pm0.8}^{\tiny(66.7)}$  &14.6$_{\tiny\pm2.3}^{\tiny(74.5)}$ &\textbf{57.8}{\tiny${\pm8.3}$} &72.7{\tiny${\pm0.7}$}\\ \midrule
  $S_3$             & 26.8{\tiny${\pm2.4}$} & 34.1$_{\tiny\pm0.8}^{\tiny(70.2)}$  & 33.6$_{\tiny\pm0.7}^{\tiny(66.3)}$  &12.9$_{\tiny\pm3.4}^{\tiny(71.2)}$ &\textbf{49.8}{\tiny${\pm5.2}$} &72.7{\tiny${\pm0.7}$}\\ \midrule
  $S_1 + S_2$       & 26.8{\tiny${\pm2.4}$} & 34.0$_{\tiny\pm13.9}^{\tiny(67.9)}$ & 18.3$_{\tiny\pm3.2}^{\tiny(68.2)}$  &22.2$_{\tiny\pm3.0}^{\tiny(71.3)}$ &\textbf{52.9}{\tiny${\pm1.0}$} &72.7{\tiny${\pm0.7}$}\\ \midrule
  $S_1 + S_3$       & 26.8{\tiny${\pm2.4}$} & 49.9$_{\tiny\pm15.7}^{\tiny(70.3)}$ & 48.3$_{\tiny\pm15.3}^{\tiny(68.7)}$ &20.3$_{\tiny\pm2.8}^{\tiny(72.3)}$ &\textbf{53.4}{\tiny${\pm2.3}$} &72.7{\tiny${\pm0.7}$}\\ \midrule
  $S_2 + S_3$       & 26.8{\tiny${\pm2.4}$} & 49.5$_{\tiny\pm7.5}^{\tiny(71.3)}$  & 50.9$_{\tiny\pm12.0}^{\tiny(72.0)}$ &18.5$_{\tiny\pm7.5}^{\tiny(74.6)}$ &\textbf{53.4}{\tiny${\pm4.1}$} &72.7{\tiny${\pm0.7}$}\\ \midrule
  $S_1 + S_2 + S_3$ & 26.8{\tiny${\pm2.4}$} & 34.1$_{\tiny\pm16.4}^{\tiny(69.0)}$ & 40.3$_{\tiny\pm26.3}^{\tiny(68.5)}$ &26.4$_{\tiny\pm1.2}^{\tiny(71.0)}$ &\textbf{72.3}{\tiny${\pm1.5}$} &72.7{\tiny${\pm0.7}$}\\ \bottomrule
  \end{tabular}
\end{table}
\paragraph{Results}
Table~\ref{tab:multi_source} presents our results on learning from multiple source tasks. Compared with the baselines, \textsc{tofu} achieves the best performance across all 7 transfer settings.

We observe that having two tasks doesn't necessarily improve the target performance for $\textsc{tofu}$. This result is actually not surprising. For example, let's consider having two source tasks $S_1$ and $S_2$.
\textsc{tofu} learns to recognize \texttt{red} vs. \texttt{blue} from $S_1$ and \texttt{red} vs. \texttt{green} from $S_2$, but \textsc{tofu} doesn't know that \texttt{blue} should be separated from \texttt{green} in the unstable feature space. Therefore, we shouldn't expect to see any performance improvement when we combine $S_1$ and $S_2$. However, if we have one more source task $S_3$ which specifies the invariance between \texttt{blue} and \texttt{green}, \textsc{tofu} is able to achieve the oracle performance.

For the direct transfer baselines, we see that \textsc{multitask} simply learns to overfit the spurious correlation and performs similar to \textsc{erm}. \textsc{reuse} and \textsc{finetune} generally perform better when more source tasks are available. However, their testing performance vary a lot across different runs.
\section{Full results on CelebA}
\label{appendix:full}

\begin{figure}[t]
    \centering
    \includegraphics[width=0.6\textwidth]{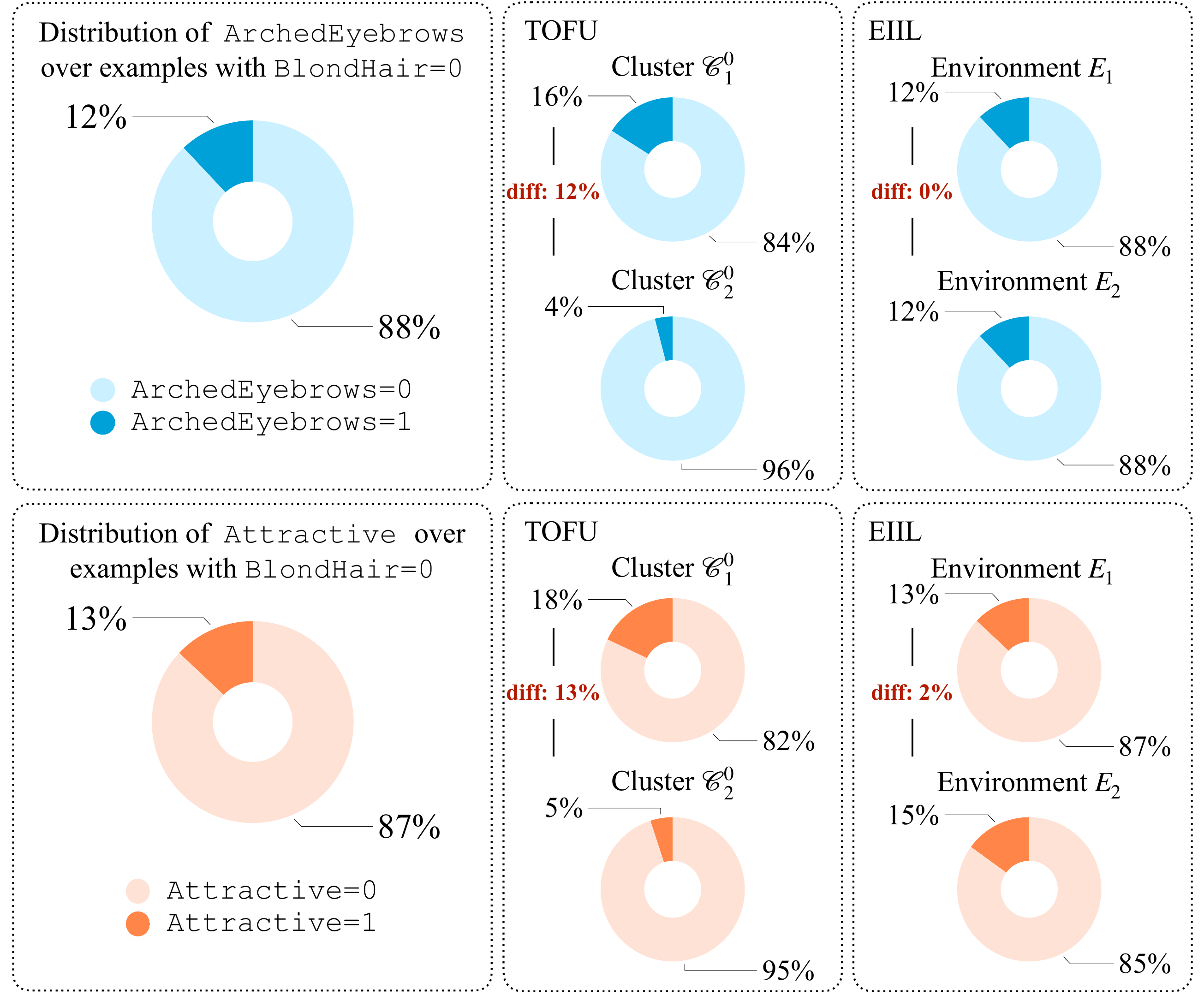}
    \caption{Visualization of the   unknown attribute \texttt{ArchedEyebrows} and \texttt{Attractive} on CelebA. Left:
        distributions of \texttt{ArchedEyebrows} and \texttt{Attractive} in the training data. Mid: partitions learned by \textsc{tofu}. Right: partitions learned by \textsc{eiil}.}
    \label{fig:celeba_visual_additional}
\end{figure}

\begin{landscape}

\begin{table}[t]\scriptsize
\centering
\caption{Worst-group accuracy on CelebA. 
  The source task is to predict \texttt{Eyeglasses} and the target task is to predict \texttt{BlondHair}. We use the attribute \texttt{Young} to define two environments: $E_1=\{\texttt{Young}=0\}$, $E_2=\{\texttt{Young}=1\}$. Both environments are available in the source task. In the target task, we only have access to $E_1$ during training and validation.
  }
\begin{tabular}{lcccccccccccccccc}
\toprule
                      Methods & \textsc{erm}   & \makecell{\textsc{finetune}\\ \textsc{pi}} & 
                      \makecell{\textsc{finetune}\\ \textsc{dann}} & \makecell{\textsc{finetune}\\ \textsc{c-dann}}  & \makecell{\textsc{finetune}\\ \textsc{mmd}}  & \makecell{\textsc{reuse}\\ \textsc{pi}}  & \makecell{\textsc{reuse}\\ \textsc{dann}}& \makecell{\textsc{reuse}\\ \textsc{c-dann}} & \makecell{\textsc{reuse}\\ \textsc{mmd}} & \makecell{\textsc{multi}\\ \textsc{task}} & \textsc{eiil}  & \textsc{george} & \textsc{lff}   & \textsc{m-ada} & \makecell{\textsc{dg-}\\ \textsc{mmld}} & \textsc{tofu}  \\
\midrule
Arched\_Eyebrows      & 75.43 & 71.86       & 65.38         & 73.85           & 76.07        & 53.71 & 59.56      & 56.02        & 48.91     & 69.66        & 64.71 & 74.73  & 45.41 & 64.61 & 69.51   & 85.66 \\
Attractive            & 75.00 & 72.73       & 63.35         & 75.61           & 74.33        & 52.13 & 62.03      & 57.78        & 48.46     & 72.73        & 64.43 & 73.66  & 47.67 & 67.33 & 68.42   & 88.30 \\
Bags\_Under\_Eyes     & 70.91 & 62.50       & 56.86         & 75.86           & 78.57        & 52.50 & 64.58      & 57.50        & 58.74     & 70.00        & 66.67 & 77.78  & 42.59 & 70.34 & 63.41   & 90.38 \\
Bald                  & 80.79 & 76.84       & 71.14         & 80.92           & 79.58        & 55.56 & 67.99      & 61.70        & 60.00     & 77.18        & 71.71 & 79.07  & 52.05 & 71.57 & 77.30   & 91.53 \\
Bangs                 & 76.06 & 69.33       & 63.59         & 77.11           & 71.76        & 51.15 & 64.67      & 53.42        & 59.29     & 70.22        & 65.29 & 76.74  & 48.04 & 63.54 & 66.88   & 88.70 \\
Big\_Lips             & 75.29 & 72.73       & 64.46         & 73.03           & 73.89        & 54.41 & 66.09      & 59.24        & 58.75     & 72.00        & 69.32 & 78.24  & 47.88 & 70.79 & 69.13   & 88.66 \\
Big\_Nose             & 80.65 & 71.43       & 68.29         & 79.17           & 76.47        & 54.33 & 63.27      & 58.90        & 51.16     & 76.59        & 71.43 & 78.76  & 48.84 & 70.93 & 68.29   & 88.89 \\
Black\_Hair           & 80.79 & 76.84       & 71.14         & 80.92           & 79.58        & 55.56 & 67.99      & 61.70        & 60.00     & 77.18        & 71.71 & 79.07  & 52.05 & 71.57 & 77.30   & 91.53 \\
Blurry                & 68.75 & 62.07       & 58.06         & 64.71           & 65.52        & 52.94 & 67.39      & 56.25        & 28.12     & 56.76        & 50.00 & 75.86  & 29.03 & 34.48 & 48.15   & 80.65 \\
Brown\_Hair           & 60.00 & 25.00       & 16.67         & 33.33           & 50.00        & 50.00 & 33.33      & 50.00        & 57.71     & 66.67        & 0.00  & 57.14  & 51.74 & 60.00 & 66.67   & 80.00 \\
Bushy\_Eyebrows       & 80.79 & 76.67       & 66.67         & 80.75           & 50.00        & 55.56 & 67.99      & 61.57        & 52.66     & 77.18        & 50.00 & 50.00  & 51.89 & 50.00 & 50.00   & 91.47 \\
Chubby                & 57.14 & 20.00       & 12.50         & 25.00           & 28.57        & 33.33 & 40.00      & 60.00        & 60.00     & 33.33        & 40.00 & 33.33  & 51.05 & 33.33 & 77.22   & 57.14 \\
Double\_Chin          & 64.29 & 50.00       & 70.83         & 80.00           & 64.29        & 50.00 & 60.00      & 60.00        & 41.67     & 50.00        & 55.56 & 50.00  & 51.62 & 54.55 & 30.00   & 87.50 \\
Eyeglasses            & 60.00 & 68.75       & 53.85         & 75.00           & 58.33        & 15.38 & 6.25       & 15.38        & 0.00      & 60.00        & 42.86 & 76.47  & 22.22 & 69.23 & 40.00   & 73.33 \\
Goatee                & 0.00  & 76.84       & 0.00          & 0.00            & 79.58        & 55.56 & 67.99      & 61.70        & 60.00     & 77.10        & 71.71 & 0.00   & 52.05 & 0.00  & 0.00    & 91.53 \\
Gray\_Hair            & 64.29 & 54.55       & 54.55         & 63.64           & 66.67        & 55.44 & 14.29      & 33.33        & 59.85     & 44.44        & 37.50 & 73.33  & 20.00 & 71.28 & 37.50   & 85.71 \\
Heavy\_Makeup         & 70.95 & 65.89       & 56.85         & 70.59           & 71.53        & 52.48 & 56.49      & 50.76        & 44.53     & 66.91        & 54.86 & 70.83  & 38.10 & 62.24 & 60.63   & 84.25 \\
High\_Cheekbones      & 65.96 & 62.96       & 58.62         & 72.34           & 75.61        & 47.31 & 55.29      & 50.53        & 45.88     & 66.33        & 54.00 & 68.82  & 37.21 & 51.09 & 62.96   & 86.73 \\
Male                  & 22.86 & 20.00       & 21.62         & 26.32           & 34.48        & 32.00 & 26.47      & 43.33        & 39.29     & 33.33        & 23.53 & 40.62  & 20.00 & 21.88 & 10.53   & 66.67 \\
Mouth\_Slightly\_Open & 75.47 & 73.64       & 68.10         & 77.86           & 79.55        & 45.61 & 64.71      & 57.02        & 57.28     & 76.03        & 67.44 & 78.33  & 48.57 & 66.39 & 76.47   & 91.01 \\
Mustache              & 80.79 & 76.84       & 71.14         & 80.92           & 79.58        & 55.56 & 67.99      & 61.70        & 60.00     & 77.18        & 71.71 & 79.07  & 52.05 & 71.57 & 77.30   & 91.53 \\
Narrow\_Eyes          & 74.58 & 76.13       & 59.70         & 78.87           & 78.39        & 52.70 & 67.24      & 60.26        & 52.83     & 71.67        & 68.66 & 67.74  & 50.79 & 58.21 & 69.49   & 87.04 \\
No\_Beard             & 0.00  & 76.75       & 0.00          & 0.00            & 79.58        & 0.00  & 67.99      & 0.00         & 60.00     & 77.10        & 71.62 & 0.00   & 51.89 & 0.00  & 33.33   & 91.50 \\
Oval\_Face            & 79.92 & 75.00       & 70.28         & 80.85           & 70.00        & 55.00 & 67.31      & 60.17        & 53.61     & 76.77        & 71.09 & 78.21  & 51.61 & 70.20 & 76.79   & 91.37 \\
Pale\_Skin            & 71.43 & 76.54       & 60.00         & 80.39           & 72.22        & 46.15 & 67.59      & 54.55        & 54.55     & 71.43        & 71.03 & 69.23  & 42.86 & 60.00 & 77.12   & 83.05 \\
Pointy\_Nose          & 77.72 & 73.30       & 65.82         & 77.03           & 75.13        & 52.69 & 67.98      & 60.87        & 54.30     & 75.26        & 66.00 & 77.42  & 48.28 & 69.19 & 71.58   & 90.20 \\
Receding\_Hairline    & 41.18 & 68.75       & 43.75         & 52.38           & 63.64        & 47.37 & 40.00      & 41.67        & 58.89     & 58.82        & 26.67 & 65.00  & 35.00 & 61.11 & 36.84   & 70.00 \\
Rosy\_Cheeks          & 78.49 & 75.11       & 67.47         & 79.41           & 79.20        & 54.22 & 65.18      & 56.06        & 39.39     & 75.89        & 68.50 & 78.17  & 38.36 & 68.80 & 74.68   & 87.69 \\
Sideburns             & 0.00  & 76.84       & 0.00          & 0.00            & 79.58        & 55.56 & 67.99      & 61.70        & 60.00     & 77.10        & 71.71 & 0.00   & 52.05 & 0.00  & 0.00    & 91.53 \\
Smiling               & 72.16 & 71.74       & 62.77         & 73.08           & 77.78        & 45.10 & 57.95      & 52.08        & 51.58     & 73.00        & 61.95 & 75.00  & 36.46 & 62.63 & 69.41   & 90.09 \\
Straight\_Hair        & 79.37 & 65.31       & 53.23         & 68.75           & 68.63        & 54.98 & 66.15      & 57.69        & 57.96     & 76.54        & 68.25 & 71.93  & 50.00 & 57.89 & 71.74   & 84.48 \\
Wavy\_Hair            & 77.19 & 69.01       & 67.47         & 76.24           & 74.68        & 55.47 & 67.82      & 60.14        & 58.99     & 75.33        & 70.19 & 75.80  & 50.60 & 65.06 & 76.43   & 87.95 \\
Wearing\_Earrings     & 71.88 & 70.63       & 64.00         & 72.73           & 75.16        & 54.65 & 60.00      & 55.06        & 52.00     & 70.29        & 64.50 & 76.97  & 43.26 & 66.48 & 70.97   & 87.36 \\
Wearing\_Hat          & 80.79 & 76.84       & 71.14         & 80.92           & 79.58        & 55.56 & 67.99      & 61.70        & 60.00     & 77.18        & 71.71 & 79.07  & 52.05 & 71.57 & 77.30   & 91.53 \\
Wearing\_Lipstick     & 51.32 & 46.97       & 40.00         & 46.75           & 56.06        & 46.38 & 45.12      & 46.38        & 41.79     & 50.00        & 41.89 & 66.67  & 32.89 & 50.00 & 42.11   & 76.32 \\
Wearing\_Necklace     & 76.12 & 70.22       & 63.45         & 72.77           & 74.72        & 51.87 & 62.87      & 54.60        & 56.10     & 68.95        & 64.53 & 75.00  & 49.73 & 65.05 & 70.83   & 84.65 \\
Wearing\_Necktie      & 0.00  & 20.00       & 0.00          & 0.00            & 20.00        & 50.00 & 16.67      & 50.00        & 0.00      & 0.00         & 0.00  & 0.00   & 20.00 & 0.00  & 0.00    & 57.14 \\
5\_Clock\_Shadow      & 0.00  & 0.00        & 0.00          & 50.00           & 0.00         & 0.00  & 0.00       & 61.70        & 59.04     & 0.00         & 66.67 & 79.00  & 0.00  & 50.00 & 0.00    & 91.53 \\\midrule
Avg                   & 61.01 & 63.07       & 50.60         & 62.03           & 66.80        & 47.58 & 55.27      & 53.22        & 50.61     & 64.37        & 57.62 & 63.34  & 42.52 & 54.55 & 55.69   & 84.86
\\\bottomrule
\end{tabular}
\end{table}

\begin{table}[t]\scriptsize
\centering
\caption{Average-group accuracy on CelebA. 
  The source task is to predict \texttt{Eyeglasses} and the target task is to predict \texttt{BlondHair}. We use the attribute \texttt{Young} to define two environments: $E_1=\{\texttt{Young}=0\}$, $E_2=\{\texttt{Young}=1\}$. Both environments are available in the source task. In the target task, we only have access to $E_1$ during training and validation.
  }
\begin{tabular}{lcccccccccccccccc}
\toprule
                      Methods & \textsc{erm}   & \makecell{\textsc{finetune}\\ \textsc{pi}} & 
                      \makecell{\textsc{finetune}\\ \textsc{dann}} & \makecell{\textsc{finetune}\\ \textsc{c-dann}}  & \makecell{\textsc{finetune}\\ \textsc{mmd}}  & \makecell{\textsc{reuse}\\ \textsc{pi}}  & \makecell{\textsc{reuse}\\ \textsc{dann}}& \makecell{\textsc{reuse}\\ \textsc{c-dann}} & \makecell{\textsc{reuse}\\ \textsc{mmd}} & \makecell{\textsc{multi}\\ \textsc{task}} & \textsc{eiil}  & \textsc{george} & \textsc{lff}   & \textsc{m-ada} & \makecell{\textsc{dg-}\\ \textsc{mmld}} & \textsc{tofu}  \\
\midrule
Arched\_Eyebrows      & 88.52 & 87.02    & 83.89         & 88.90           & 88.80        & 64.05 & 72.44      & 67.07        & 59.80     & 86.91        & 85.12 & 87.89  & 60.23 & 83.33 & 87.38   & 91.47 \\
Attractive            & 88.94 & 87.34    & 84.98         & 89.39           & 89.74        & 64.85 & 72.26      & 67.90        & 61.51     & 87.44        & 85.96 & 87.70  & 60.16 & 83.59 & 87.50   & 92.76 \\
Bags\_Under\_Eyes     & 87.09 & 84.10    & 81.34         & 88.14           & 88.61        & 66.88 & 73.83      & 68.33        & 63.11     & 85.21        & 83.90 & 87.97  & 60.72 & 85.34 & 84.78   & 92.41 \\
Bald                  & 92.50 & 90.86    & 89.31         & 92.60           & 92.41        & 68.41 & 80.26      & 74.62        & 62.83     & 91.04        & 89.86 & 91.91  & 67.94 & 88.81 & 91.60   & 94.73 \\
Bangs                 & 88.69 & 86.76    & 83.95         & 88.83           & 88.91        & 65.25 & 73.53      & 68.06        & 61.32     & 86.98        & 84.70 & 87.45  & 59.25 & 83.02 & 86.92   & 91.96 \\
Big\_Lips             & 88.99 & 87.23    & 84.19         & 89.04           & 88.87        & 64.42 & 73.86      & 68.45        & 61.30     & 87.24        & 84.91 & 87.87  & 60.98 & 83.13 & 87.56   & 91.78 \\
Big\_Nose             & 89.17 & 85.87    & 83.58         & 88.71           & 88.20        & 66.33 & 73.79      & 72.28        & 60.50     & 87.47        & 84.89 & 88.52  & 61.76 & 84.47 & 85.73   & 92.24 \\
Black\_Hair           & 92.36 & 90.98    & 89.16         & 92.46           & 92.33        & 69.10 & 78.04      & 73.11        & 61.93     & 90.88        & 89.76 & 91.64  & 65.59 & 88.79 & 91.49   & 94.59 \\
Blurry                & 85.84 & 83.87    & 81.31         & 85.65           & 84.64        & 64.00 & 73.70      & 67.50        & 57.77     & 82.58        & 79.89 & 87.05  & 57.64 & 75.71 & 80.97   & 89.54 \\
Brown\_Hair           & 83.48 & 74.15    & 70.11         & 77.46           & 81.59        & 62.60 & 63.56      & 65.43        & 64.43     & 84.43        & 66.88 & 82.78  & 65.16 & 79.39 & 84.58   & 89.42 \\
Bushy\_Eyebrows       & 92.51 & 93.35    & 83.67         & 94.41           & 81.87        & 68.45 & 77.33      & 80.75        & 59.30     & 91.05        & 79.81 & 81.49  & 75.26 & 79.29 & 81.25   & 95.90 \\
Chubby                & 83.66 & 73.26    & 70.27         & 75.81           & 76.60        & 59.84 & 68.50      & 69.98        & 62.31     & 76.28        & 77.23 & 77.25  & 74.76 & 74.74 & 93.51   & 84.85 \\
Double\_Chin          & 85.40 & 80.78    & 86.50         & 89.03           & 85.46        & 63.08 & 73.40      & 69.35        & 58.24     & 80.44        & 81.00 & 81.38  & 65.47 & 80.04 & 76.46   & 92.14 \\
Eyeglasses            & 84.55 & 85.51    & 80.49         & 87.93           & 83.80        & 56.59 & 63.18      & 62.01        & 54.50     & 83.33        & 78.39 & 87.63  & 56.20 & 83.68 & 78.97   & 89.34 \\
Goatee                & 69.44 & 91.19    & 67.04         & 69.51           & 92.41        & 70.67 & 78.34      & 73.38        & 61.62     & 93.14        & 89.86 & 69.00  & 66.23 & 66.54 & 68.77   & 94.74 \\
Gray\_Hair            & 84.77 & 81.38    & 79.83         & 84.56           & 85.53        & 64.70 & 60.83      & 61.60        & 64.12     & 77.40        & 76.56 & 86.77  & 56.57 & 86.24 & 77.99   & 90.88 \\
Heavy\_Makeup         & 87.65 & 86.04    & 82.84         & 87.97           & 88.33        & 64.24 & 70.58      & 66.23        & 59.03     & 85.76        & 83.80 & 86.77  & 57.57 & 82.23 & 85.58   & 91.89 \\
High\_Cheekbones      & 86.81 & 84.89    & 82.24         & 87.76           & 88.11        & 63.41 & 72.29      & 67.73        & 60.11     & 85.31        & 82.59 & 86.57  & 59.92 & 80.71 & 85.33   & 91.98 \\
Male                  & 75.65 & 73.61    & 72.84         & 76.65           & 78.35        & 58.93 & 64.02      & 63.84        & 57.06     & 76.65        & 73.87 & 78.95  & 53.36 & 72.08 & 71.41   & 86.09 \\
Mouth\_Slightly\_Open & 88.26 & 86.66    & 83.82         & 88.77           & 88.73        & 63.64 & 73.97      & 68.89        & 61.89     & 86.63        & 84.59 & 87.93  & 61.74 & 83.20 & 87.41   & 92.68 \\
Mustache              & 92.50 & 91.18    & 89.31         & 92.60           & 92.42        & 70.96 & 80.02      & 74.32        & 62.43     & 90.88        & 89.70 & 91.74  & 67.73 & 88.35 & 91.61   & 94.90 \\
Narrow\_Eyes          & 87.39 & 87.46    & 82.06         & 88.62           & 89.73        & 65.57 & 74.24      & 69.02        & 62.02     & 85.66        & 84.32 & 85.98  & 61.69 & 81.11 & 85.95   & 91.55 \\
No\_Beard             & 69.29 & 93.18    & 66.99         & 69.43           & 92.17        & 52.43 & 77.55      & 55.04        & 62.96     & 92.90        & 92.21 & 68.85  & 75.04 & 66.44 & 77.04   & 95.72 \\
Oval\_Face            & 90.37 & 89.64    & 85.11         & 89.30           & 86.80        & 64.36 & 74.17      & 71.06        & 65.05     & 87.28        & 85.68 & 89.15  & 61.67 & 85.11 & 88.25   & 93.15 \\
Pale\_Skin            & 84.86 & 87.29    & 78.96         & 87.83           & 87.05        & 62.61 & 74.97      & 66.94        & 61.23     & 82.93        & 86.75 & 85.24  & 62.05 & 81.10 & 87.52   & 89.09 \\
Pointy\_Nose          & 88.96 & 87.15    & 84.79         & 89.31           & 89.70        & 64.31 & 73.78      & 68.35        & 62.87     & 86.78        & 85.66 & 87.19  & 62.81 & 84.03 & 88.31   & 92.46 \\
Receding\_Hairline    & 79.83 & 85.35    & 78.16         & 82.79           & 85.12        & 62.63 & 68.56      & 65.37        & 65.27     & 82.72        & 74.29 & 85.13  & 59.37 & 81.64 & 78.43   & 88.25 \\
Rosy\_Cheeks          & 89.36 & 87.33    & 87.21         & 90.18           & 88.66        & 64.09 & 73.83      & 67.34        & 57.29     & 86.74        & 87.18 & 88.25  & 62.42 & 84.55 & 88.97   & 92.44 \\
Sideburns             & 69.45 & 91.20    & 67.04         & 69.38           & 92.41        & 71.82 & 77.79      & 73.31        & 63.78     & 93.02        & 89.87 & 69.00  & 66.78 & 66.28 & 68.78   & 94.91 \\
Smiling               & 87.74 & 86.29    & 82.97         & 88.00           & 88.47        & 63.41 & 72.87      & 68.04        & 61.17     & 86.21        & 83.88 & 87.58  & 60.19 & 82.50 & 86.38   & 92.49 \\
Straight\_Hair        & 88.96 & 84.84    & 81.07         & 87.03           & 86.62        & 64.73 & 74.71      & 68.96        & 63.46     & 87.30        & 84.27 & 86.76  & 62.28 & 81.02 & 86.64   & 91.13 \\
Wavy\_Hair            & 88.67 & 86.64    & 83.64         & 88.89           & 88.79        & 65.01 & 73.11      & 67.84        & 62.01     & 86.55        & 84.70 & 87.69  & 60.87 & 83.03 & 87.14   & 92.32 \\
Wearing\_Earrings     & 88.30 & 86.72    & 83.74         & 88.77           & 88.57        & 64.06 & 73.10      & 68.00        & 61.03     & 86.66        & 84.54 & 87.49  & 59.64 & 83.36 & 86.97   & 92.06 \\
Wearing\_Hat          & 92.54 & 90.68    & 89.34         & 92.63           & 92.44        & 69.06 & 79.95      & 73.52        & 69.47     & 91.09        & 89.88 & 91.67  & 66.86 & 89.02 & 91.63   & 94.69 \\
Wearing\_Lipstick     & 82.97 & 81.17    & 78.10         & 82.33           & 84.24        & 62.53 & 68.30      & 64.67        & 57.33     & 81.22        & 79.07 & 85.07  & 56.03 & 78.93 & 79.89   & 88.85 \\
Wearing\_Necklace     & 88.63 & 87.57    & 84.75         & 89.72           & 88.61        & 63.96 & 72.85      & 67.88        & 61.19     & 87.65        & 85.54 & 87.75  & 58.05 & 83.66 & 87.18   & 91.03 \\
Wearing\_Necktie      & 69.35 & 73.32    & 67.03         & 69.56           & 74.40        & 63.62 & 63.11      & 67.21        & 46.32     & 68.03        & 67.50 & 68.88  & 56.13 & 66.31 & 68.81   & 84.64 \\
5\_Clock\_Shadow      & 69.38 & 68.21    & 66.92         & 81.87           & 69.33        & 52.69 & 57.14      & 72.06        & 70.87     & 68.06        & 83.83 & 93.66  & 51.29 & 78.39 & 68.77   & 93.93 \\
\midrule
Avg                   & 85.07 & 85.27    & 80.49         & 85.57           & 86.81        & 64.14 & 72.31      & 68.56        & 61.27     & 85.21        & 83.22 & 85.04  & 62.04 & 80.77 & 83.51   & 91.71
\\\bottomrule
\end{tabular}
\end{table}
\end{landscape}



\end{document}